\algrenewcommand\algorithmicrequire{\textbf{Input:}}
\algrenewcommand\algorithmicensure{\textbf{Output:}}
\algrenewcommand\algorithmiccomment[1]{\textcolor{blue}{//~#1}}
\newtheorem{theorem}{Theorem}
\newtheorem{problem}{Problem}
\newtheorem{definition}{Definition}
\newtheorem{proposition}{Proposition}
\newcommand{\tnfl}{\Gamma}
\newcommand{\B}{\mathcal{B}}
\title{\LARGE \bf 
Formal Safety Verification and Refinement for \\ Generative Motion Planners via Certified Local Stabilization
}
\author{Devesh Nath$^*$ \and Haoran Yin$^*$  \and Glen Chou
\thanks{The authors are with the Georgia Institute of Technology, Atlanta, GA, USA. \texttt{\{dnath7, hyin95, chou\}@gatech.edu}}
}
\begin{document}

\maketitle

\begin{abstract}
\looseness-1We present a method for formal safety verification of learning-based generative motion planners. Generative motion planners (GMPs) offer advantages over traditional planners, but verifying the safety and dynamic feasibility of their outputs is difficult since neural network verification (NNV) tools scale only to a few hundred neurons, while GMPs often contain millions. To preserve GMP expressiveness while enabling verification, our key insight is to imitate the GMP by stabilizing references sampled from the GMP with a small neural tracking controller and then applying NNV to the closed-loop dynamics. This yields reachable sets that rigorously certify closed-loop safety, while the controller enforces dynamic feasibility. Building on this, we construct a library of verified GMP references and deploy them online in a way that imitates the original GMP distribution whenever it is safe to do so, improving safety without retraining. We evaluate across diverse planners, including diffusion, flow matching, and vision-language models, improving safety in simulation (on ground robots and quadcopters) and on hardware (differential-drive robot).

\end{abstract}

\section{Introduction}

Motion planning has been transformed by generative models like diffusion and conditional flow matching (CFM) \cite{Xiao2023SafeDiffuser, Chisari2024PCCFM}, which learn multimodal trajectory distributions and enable generative motion planners (GMPs) that produce diverse plans from inputs like language or images \cite{Dai2025SafeFlow, Carvalho2024MPD, Nawaz2024CLFNODE, Bouvier2025DDAT}. 
However, ensuring that GMP-generated trajectories satisfy safety and dynamic feasibility is difficult: GMPs often contain millions of parameters, making neural network verification (NNV) \cite{Zhang2018CROWN} intractable, limiting their use in safety-critical settings \cite{Bouvier2025DDAT}. NNV provides hard guarantees via set-based reachability but only scales to controllers with a few hundred neurons \cite{Rober2024CARV, Everett2021NFL, Everett2020Partitioning, Jafarpour2024AkashNFL}. More scalable statistical methods \cite{DixitLWCPB23_acp, Dawson2022NeuralCBF, Dai2025SafeFlow, DBLP:conf/l4dc/LewJBP22} yield weaker probabilistic guarantees or require prohibitive samples over long horizons. Thus, existing work trades off between expressive large models lacking hard guarantees and small verifiable models unable to capture complex behaviors.

\looseness-1To bridge this gap, we propose \textbf{SaGe-MP} (\underline{Sa}fe \underline{Ge}nerative \underline{M}otion \underline{P}lanning), a method that provides hard safety and dynamic feasibility guarantees for GMP-generated motion plans. Our key insight is that while NNV tools cannot directly certify the GMP, they can certify a \textit{small} neural \textit{tracking controller} that locally stabilizes the system around GMP-sampled references.
Reachability analysis of the resulting closed-loop system yields hard assurances of safety and dynamic feasibility over a continuum of inputs. Here, the GMP acts only as an open-loop plan generator, while verification is performed on the closed-loop dynamics induced by tracking a \textit{fixed} GMP plan, resulting in a smaller computational graph that makes NNV tractable. By tracking GMP references under the true dynamics, the controller also projects potentially dynamically-infeasible GMP plans onto feasible trajectories. To preserve the original GMP behavior if possible, we develop a trajectory-library approach: multiple GMP references are sampled offline, certified as safely trackable via NNV, and deployed online in a way that mimics the potentially multimodal GMP output. In this sense, our method is a lightweight GMP refinement that enhances safety and dynamic feasibility without costly GMP retraining. Our contributions are:
\begin{figure}
    \centering
    \includegraphics[width=\linewidth]{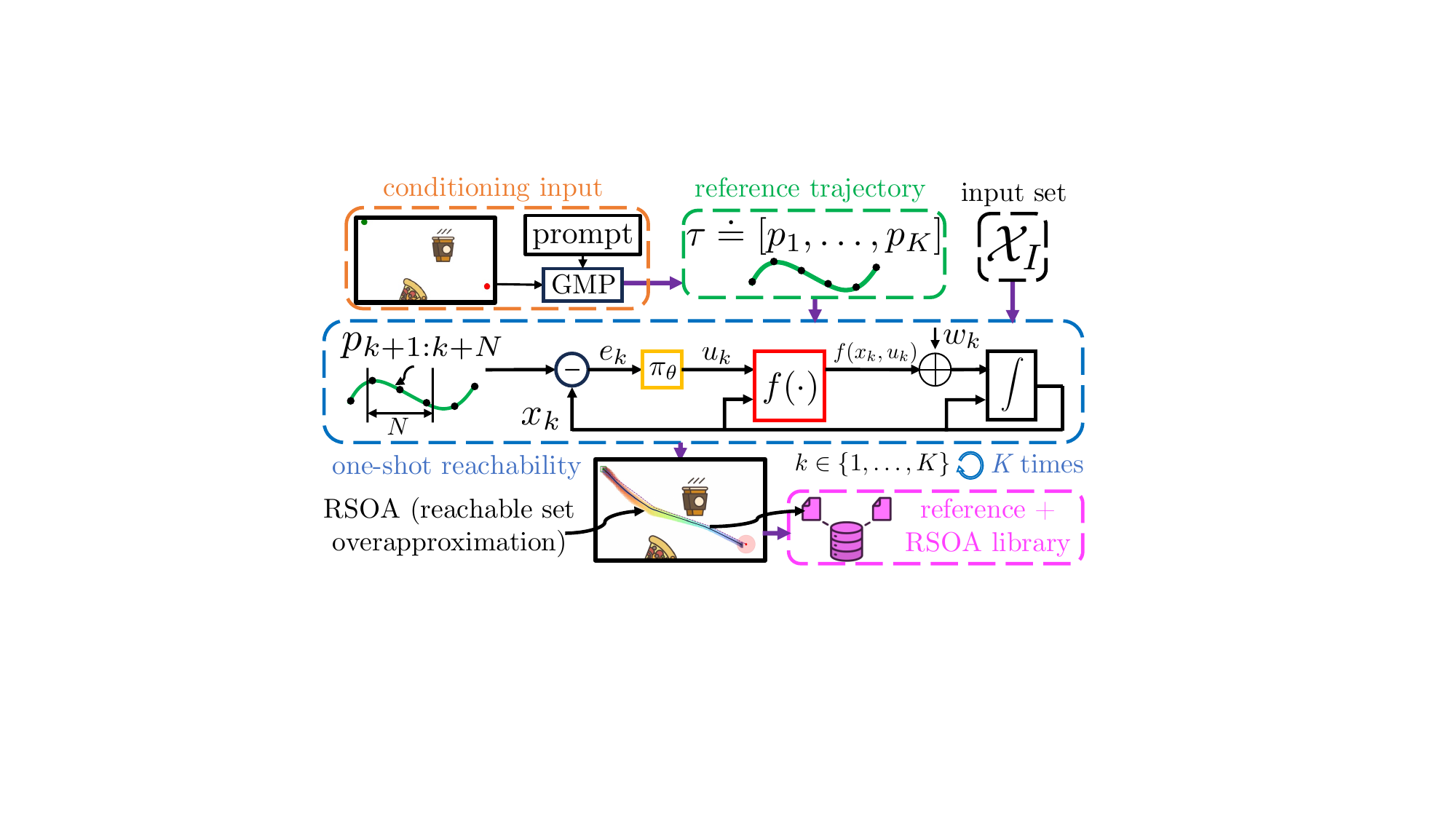}
    \vspace{-1.5em}
    \caption{An overview of our method, which samples GMP references and stabilizes them with a small neural tracking controller (Sec. \ref{sec:Controller}, verifies safety via NNV-based reachability (Sec. \ref{sec:T-NFL}-\ref{sec:RSOA}), and imitates the original GMP with a trajectory library (Sec. \ref{sec:efficient_fix}).}
    \label{fig:block}
    \vspace{-3pt}
\end{figure}

\begin{enumerate}
    \item An NNV-based method for formal safety verification of large GMPs that decouples trajectory generation from neural feedback loop verification, preserving planner expressiveness while providing hard safety guarantees.
    \item A method to certify a neural trajectory tracking controller that stabilizes dynamically-infeasible GMP references, producing safe, feasible trajectories.
    \item \looseness-1A trajectory-library method that stores certified-safe GMP plans and executes them online in a way that preserves the original behavior whenever safe. We prove sample complexity bounds for a target imitation error.
    \item Extensive simulation and real-world validation demonstrating safe stabilization of references from diverse generative models (diffusion, flow matching, VLMs, neural ODEs) on challenging nonlinear systems (e.g., 12D quadcopter, and learned NN dynamics).
\end{enumerate}

\vspace{-4pt}
\section{Related Work}
\vspace{-3pt}

\subsection{Generative Motion Planners (GMPs)}

GMPs generate diverse, human-like trajectories from rich inputs such as images or language \cite{Chisari2024PCCFM} and can be trained via imitation learning, unlike traditional trajectory optimization or sampling-based planning \cite{DBLP:books/cu/L2006}, which are limited to low-dimensional inputs. They also capture multimodal trajectory distributions, representing distinct homotopy classes that serve as a diverse set of solutions to the task.

\looseness-1
However, ensuring safety of GMP-generated trajectories is difficult. Existing methods encourage safety with planning costs \cite{Carvalho2024MPD} or use control barrier functions (CBFs) as corrections \cite{Mizuta2024CoBL, Nawaz2024CLFNODE, Xiao2023SafeDiffuser}, but often assume simplified dynamics (e.g., single integrator). For nonlinear robot dynamics \cite{DBLP:conf/amcc/DaiP23}, it is difficult to find CBFs; neural CBFs can be learned for such systems, but only heuristically encourage safety \cite{Dawson2022NeuralCBF}. In contrast, we provide hard guarantees for nonlinear NN dynamics by overriding unsafe GMP plans with a certified tracking controller that stabilizes around a nearby safe GMP plan.
Enforcing dynamic feasibility of GMP plans is also difficult. Predicting action sequences \cite{pmlr-v283-li25a} ensures feasibility but yields discontinuous outputs that GMPs predict poorly \cite{DBLP:conf/iclr/AjayDGTJA23}. Some methods enforce feasibility on linear systems \cite{DBLP:journals/corr/abs-2504-00236}. Others project GMP references onto the manifold of feasible plans under nonlinear dynamics \cite{Bouvier2025DDAT} but do not guarantee feasibility and have high replanning cost under model error. Our method instead stabilizes GMP references with a tracking controller, projecting them onto the dynamically-feasible manifold and rejecting errors.

\vspace{-3pt}
\subsection{Verification of NN Controllers}
\looseness-1\noindent \textbf{Formal methods.\quad} Reachability analysis verifies that system trajectories remain in a safe set \cite{DBLP:journals/arcras/AlthoffFG21}. Set-based methods propagate input and disturbance sets to compute reachable set overapproximations (RSOAs). For NN controllers, abstraction tools like CROWN \cite{Zhang2018CROWN} provide affine bounds on outputs and enable RSOA computation \cite{Rober2024CARV, Chen2023One-Shot, Jafarpour2024AkashNFL}. As the NN size grows, however, the computed RSOAs become highly conservative, revealing a tradeoff between model expressiveness and verification tractability. Thus, NNV-based reachability is limited to small NN controllers (hundreds of neurons) \cite{Rober2024CARV}. Symbolic and one-shot reachability \cite{Chen2023One-Shot, Rober2024CARV} reduces conservativeness \cite{Everett2021NFL}, but remain limited by NN size. Direct verification of GMPs, with millions of neurons, is intractable. We bridge this gap by stabilizing GMP samples with a small, verified tracking controller, preserving GMP expressiveness while certifying safety over a continuum of initial states.

\noindent\textbf{Statistical methods.\quad} Statistical methods that bound RSOA soundness probabilities scale better but give weaker guarantees. Conformal prediction (CP) \cite{DixitLWCPB23_acp} ensures reachable set coverage under an exchangeability assumption between the training and test distributions \cite{DBLP:conf/cdc/MuthaliSDLRFT23}, which feedback-controlled, time-varying data often violates. Adaptive CP relaxes this but only offers weaker long-run average coverage assurances. Simulation-based methods \cite{DBLP:conf/l4dc/LewJBP22, 10.1007/978-3-030-99524-9_17, jiang2017usingneuralnetworkscompute} approximate reachable sets probabilistically, but sample complexity grows quickly with horizon length. In contrast, our method provides hard guarantees robust to distribution shifts, scaling to GMP references hundreds of timesteps long.

\vspace{-5pt}
\section{Preliminaries}

We consider discrete-time nonlinear dynamical systems
\begin{equation}
    x_{k+1} = f(x_k, u_k, w_k),
    \label{eq:open_loop_dynamics}
\end{equation}
\looseness-1where $k \in \{1, \dots, K\}$ is the discrete timestep. The terms $x_k \in \mathcal{X} \subseteq \mathbb{R}^n$, $u_k \in \mathcal{U} \subseteq \mathbb{R}^m$, and $w_k \in \mathcal{W} \subseteq \mathbb{R}^d$ represent the state, control input, and an external disturbance, respectively. We assume that the initial state $x_I$ lies within a known set $\mathcal{X}_I \subseteq \mathcal{X}$ and that the disturbance $w_k$ is drawn from a compact set $\mathcal{W} \doteq \{w \mid \Vert w\Vert_\infty \le \overline{w}\}$, where $\overline{w}$ is known \textit{a priori}, though we note that these bounds can also be estimated from data \cite{DBLP:journals/ral/KnuthCOB21, 10161001}. Denote $\textsf{Int}(\underline{a}, \overline{a}) = \{a \mid \underline{a} \le a \le \overline{a}\}$, as a hyper-rectangular interval, where $\underline{a}, \overline{a} \in \mathbb{R}^A$ and the inequalities $\le$ are interpreted element-wise, i.e., $\underline{a}_i \le a_i \le \overline{a}_i$, $1 \le i \le A$. Denote the $a$-ball as $\B_a(c) \doteq \{ x \mid \Vert x - c \Vert_\infty \le a\}$.

\subsection{Computational Graph Robustness Verification}
A computational graph (CG) is a directed acyclic graph representing the sequence of mathematical operations on an input \cite{Rober2024CARV}, such as a neural network or the nonlinear function \eqref{eq:open_loop_dynamics}. For a graph $G$ with input set $\mathcal{Z} \subseteq \mathbb{R}^{n_i}$ and output $G(\mathcal{Z}) \subseteq \mathbb{R}^{n_o}$, we can compute a guaranteed overapproximation of $G(\mathcal{Z})$ using CROWN-based \cite{Zhang2018CROWN} tools like \texttt{auto\_LiRPA} \cite{Xu2020AutoLiRPA}. These tools provide guaranteed affine lower and upper bounds, $\underline{G}$ and $\overline{G}$, on the output $G(\mathcal{Z})$ for any hyper-rectangular input set $\mathcal{Z}$. This is formalized in the following proposition from \cite{Rober2024CARV}.

\vspace{-4pt}
\begin{proposition}[CG Robustness \cite{Xu2020AutoLiRPA}]
\label{thm:cg_robustness}
For CG $G$ and hyper-rectangular $\mathcal{Z} \doteq \{z \in \mathbb{R}^{n_i} \mid \underline{z} \le z \le \overline{z}\}$, there are affine functions $\underline{G}$ and $\overline{G}$ such that for all $z \in \mathcal{Z}$, $\underline{G}(z) \leq G(z) \leq \overline{G}(z)$. 
The inequalities hold element-wise and $\underline{G}(z) = \Psi z + \alpha$,  $\overline{G}(z) = \Phi z + \beta$, with $\Psi, \Phi \in \mathbb{R}^{n_o \times n_i}$ and $\alpha, \beta \in \mathbb{R}^{n_o}$.
\end{proposition}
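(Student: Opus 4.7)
The plan is to argue by structural induction on the computational graph $G$, following the CROWN-style backward bounding scheme. Since $G$ is a DAG, I would topologically order its nodes and, for every intermediate node, maintain affine lower and upper bounds (in terms of the graph's original input $z$) that sandwich that subgraph's output for all $z \in \mathcal{Z}$. The base case is trivial: at the input node the map is the identity, so $\Psi = \Phi = I$ and $\alpha = \beta = 0$ satisfy the claim.

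For the inductive step, I split on the node type. (i) If the node is an affine operation $h \mapsto Wh + b$, I split $W$ elementwise as $W = W^+ - W^-$ and compose with the incoming affine bounds $\underline{H}(z) = \Psi' z + \alpha'$ and $\overline{H}(z) = \Phi' z + \beta'$: the new lower bound applies $W^+$ to $\underline{H}$ and $W^-$ to $\overline{H}$ (flipping because of the sign), giving
\[
\underline{G}(z) = (W^+\Psi' - W^-\Phi')\,z + (W^+\alpha' - W^-\beta' + b),
\]
with $\overline{G}$ defined symmetrically; the composition of affines remains affine, so the required $(\Psi,\alpha)$, $(\Phi,\beta)$ form is preserved. (ii) If the node is a coordinatewise nonlinearity $\sigma$, I first bound the preactivation interval by evaluating the current affine bounds at the corners of the box $\mathcal{Z}$ (closed form, since they are affine in $z$). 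For each coordinate $i$, I then invoke the standard linear relaxation of $\sigma$ on $[\underline{h}_i, \overline{h}_i]$, producing slopes and offsets with $s^L_i h_i + t^L_i \le \sigma(h_i) \le s^U_i h_i + t^U_i$; composing these relaxations (again with a $\pm$ split of the diagonal slope matrices) against the incoming affine bounds yields affine $\underline{G}, \overline{G}$ in $z$.

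Iterating through the nodes in topological order yields the claimed affine sandwich at the output, with the final matrices $\Psi, \Phi$ and offsets $\alpha, \beta$ obtained as the running product/sum of layerwise terms. The main obstacle is the nonlinear step: the linear relaxations must be sound on the preactivation interval, which is why the induction must be carried out in topological order so that intervals are sound at every intermediate node. Since the proposition is purely existential, I would not need the tightest relaxations: the classical ReLU triangle bound and the tangent/chord bounds for smooth monotone $\sigma$ suffice, and finiteness of the DAG guarantees termination with affine $\underline{G}, \overline{G}$ of the stated form.
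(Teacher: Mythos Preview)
Your sketch is a sound outline of the CROWN/\texttt{auto\_LiRPA} construction and would establish the proposition. One small terminological slip: you call it the ``backward bounding scheme'' but then describe a \emph{forward} pass in topological order, maintaining affine-in-$z$ envelopes at each node; both directions work, but what you wrote is the forward variant.

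As for comparison with the paper: there is nothing to compare. The paper does not prove Proposition~\ref{thm:cg_robustness}; it is stated as a cited result from \cite{Xu2020AutoLiRPA} (and, implicitly, \cite{Zhang2018CROWN}) and then used as a black box to compute the RSOAs in \eqref{eq:RSOA} and \eqref{eq:RSOA_step_k}. Your inductive argument is essentially the proof that those references give, so you have reconstructed the intended justification rather than diverged from it.
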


\subsection{Reachability Analysis}
Reachability analysis computes the set of all states that a dynamical system can reach from given initial conditions, under admissible controls and disturbances. For brevity, the policy is written as $u_k=\pi_\theta(x_k, \eta_k)$, where $\theta$ are trainable parameters and $\eta_t \in \mathcal{I}$ are external conditioning inputs, e.g., a desired reference trajectory. The closed-loop dynamics are
\begin{equation}\label{eq:closed_loop_dynamics}
    x_{k+1} = f(x_k, \pi_\theta(x_k, \eta_k), w_k) \doteq \tilde{f}(x_k),
\end{equation}
where the dependence on control and noise is implicit in \eqref{eq:closed_loop_dynamics}.                          

\noindent \textbf{Exact Reachability:} We define the exact reachable set of $\tilde{f}$ at time $k+1$, assuming the state at time $k$ lies in the hyper-rectangular set $\mathcal{X}_{k} \doteq \{x \mid \underline{x}_{k} \le x \le \overline{x}_{k}\}$ as:
\begin{gather}
    R_{k+1}(\mathcal{X}_{k} \mid \eta_{k}) = \{x \mid x = \tilde{f}(x_{k}), \ x_{k} \in \mathcal{X}_{k}\} \notag \\
    = \left\{ x \;\middle|\;
    \begin{aligned}
        &x = f(x_{k}, \pi_\theta(x_{k}, \eta_{k}), w_{k}), \\
        &x_{k} \in \mathcal{X}_{k}, \ w_{k} \in \mathcal{W}
    \end{aligned}
    \right\}
    \label{eq:exact_reachability}
\end{gather}
Since computing exact reachable sets for nonlinear NN-controlled systems is intractable \cite{DBLP:journals/arcras/AlthoffFG21}, we compute reachable set over-approximations (RSOAs) $\hat{R} \supseteq R$ using NNV tools \cite{Zhang2018CROWN} and applying Proposition \ref{thm:cg_robustness}. We denote the certified region of attraction (ROA) as the set of initial states $\mathcal{A} \subseteq \mathcal{X}$ for which $\hat R_k \subseteq \mathcal{S}$ for all $k \in \{1,\ldots,K\}$ and $\hat R_K \subseteq \mathcal{X}_G$.

\looseness-1\noindent\textbf{RSOA Computation:} The RSOA can be computed symbolically (i.e., over time horizon $N\ge 1$ at once). The one-step case $N=1$ is commonly used \cite{DBLP:journals/arcras/AlthoffFG21}; however, increasing $N$ yields tighter bounds than one-step methods by avoiding per-step overapproximation error \cite{Rober2024CARV, Chen2023One-Shot}. Specifically, let $\tilde{f^{N}}(\cdot) = \tilde{f} \circ \dotsb \circ \tilde{f}(\cdot)$ denote $N$ compositions of $\tilde{f}$ and let the $F^N$ denote its CG. The $N$-step RSOA from timestep $k$ is:
\begin{equation}
    \hat{R}_{k+N}^N(\mathcal{X}_{k}\mid \eta_{k}) = \textsf{Int}\Big( \min_{x \in \mathcal{X}_{k}} \underline{F}^N(x), \max_{x \in \mathcal{X}_{k}} \overline{F}^N(x)\Big)
    \label{eq:RSOA}
\end{equation}

\vspace{-5pt}
\looseness-1\noindent where $\underline{F}^N$ and $\overline{F}^N$ are affine bounding functions for $F^N$ obtained using \texttt{auto\_LiRPA}. If $k=1$ and $N=K$, we compute the full $K$-step reachable set from $\mathcal{X}_I$, known as one-shot reachability \cite{Chen2023One-Shot}. Partitioning $\mathcal{X}_I$ into smaller sub-intervals yields tighter bounds \cite{Everett2020Partitioning}. While partitioning and symbolic methods yield tighter RSOAs, they are more computationally demanding than one-step methods \cite{Rober2024CARV, Everett2020Partitioning}.

\begin{figure*}
  \setlength{\intextsep}{0pt}
  \setlength{\textfloatsep}{0pt}
  \centering
  \includegraphics[width=\linewidth, trim=0pt 0pt 10pt 0pt, clip]{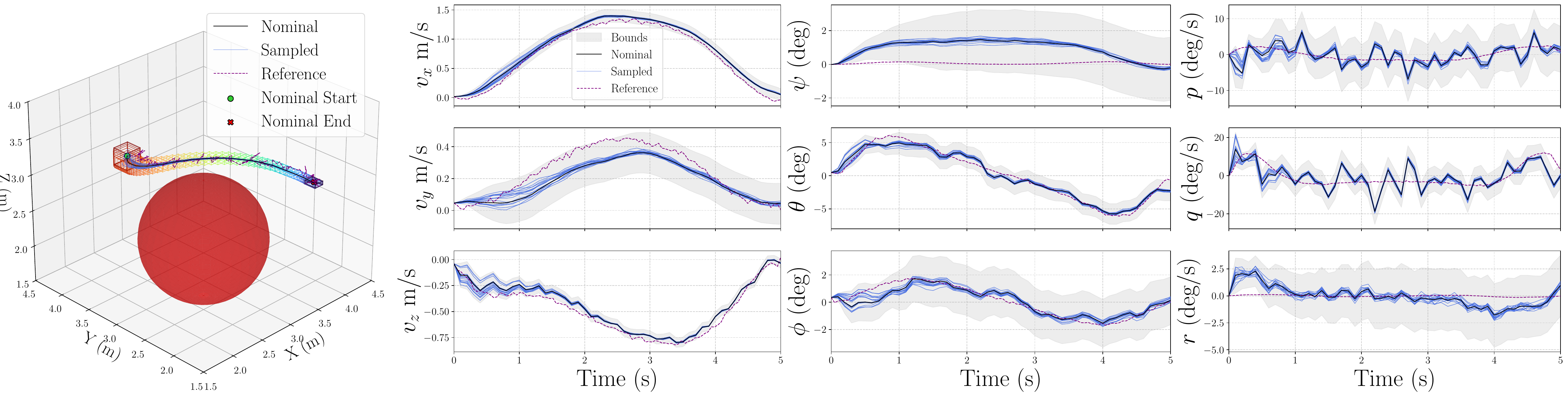}
  \vspace{-0.6em}
  \caption{RSOAs computed for a 3D quadcopter (with 12D dynamics), safely tracking a motion plan generated by a diffusion model.}
  \label{fig:3dquad_gt_plot}
\end{figure*}

\subsection{Generative Motion Planners} \label{sec:prelim_gmps}
\looseness-1We use the following motion planners $P: \mathcal{X}_I \times \mathcal{X}_G \times \mathcal{E} \rightarrow \mathcal{T}$ that map an initial state $x_I \in \mathcal{X}_I$, goal $x_g \in \mathcal{X}_G$, and task data $e \in \mathcal{E}$ (e.g., a natural language command) to generate a reference trajectory $\tau = \begin{bmatrix}p_1, \ldots, p_K\end{bmatrix} \in \mathcal{T} \subseteq \mathbb{R}^{n \times K}$. We denote learned parameters by $\nu$.

\noindent\textbf{Neural ODEs (NODEs)} \cite{DBLP:conf/nips/ChenRBD18} learn a neural vector field $\dot{x}_t = \phi_\nu(x_t, t)$ which is numerically integrated from $x_I$ to generate a discrete-time trajectory.

\looseness-1\noindent\textbf{Conditional Flow Matching (CFM)} \cite{song2022CFM} learns an ODE $\dot{z_t} = \psi_\nu(z_t, t)$ transporting a latent sample $z_0$ from a prior to a target trajectory distribution, yielding $\tau=z_1$ by integration.

\noindent\textbf{Diffusion Models} \cite{ho2020denoising} learn a network $\epsilon_\nu$ to reverse a multi-step noising process, iteratively denoising an initial sample $z_T \sim \mathcal{N}(0,I)$ in order to generate a clean trajectory $\tau$.

\noindent\textbf{Vision-Language Models (VLMs)} \cite{geminiteam2023gemini} act as zero-shot planners that generate position waypoints from an image and text, which are linearly interpolated to create trajectory $\tau$.

\section{Problem Statement}\label{sec:problem_statement}
\looseness-1We are given dynamics \eqref{eq:open_loop_dynamics}, disturbance set $\mathcal{W}$, initial state set $\mathcal{X}_I \subseteq \mathcal{X}$, goal set $\mathcal{X}_G \subseteq \mathcal{X}$, a safe set $\mathcal{S} \subseteq \mathcal{X}$ encoding safety and task constraints, and a planner $P: \mathcal{X}_I \times \mathcal{X}_G \times \mathcal{E} \rightarrow \mathcal{T}$, which maps an initial state $x_I \in \mathcal{X}_I$, a goal $x_\textrm{goal} \in \mathcal{X}_G$, and environment/task data (e.g., images, natural language) to a $K$-step reference trajectory $\tau \in \mathcal{T} \subseteq \mathbb{R}^{n \times K}$. References from $P$ may not be dynamically feasible (i.e., consistent with \eqref{eq:open_loop_dynamics}) or safe (i.e., contained in $\mathcal{S}$). We assume nothing further about $P$; it may be stochastic and can represent both learned and traditional planners.
We aim to (1) design a controller that replicates the behavior of $P$ whenever it is safe to do so and (2) formally verify that the closed-loop system safely reaches $\mathcal{X}_G$. Specifically, we aim to solve:
\begin{problem}[Design]\label{prob:design}
\looseness-1Train an NN control policy $\pi_\theta: \mathcal{X} \times \mathcal{I} \rightarrow \mathcal{U}$ that, given the current state $x$ and parameters $\eta$, generates closed-loop trajectories from $\mathcal{X}_I$ that (1) resemble the open-loop trajectories produced by $P$ while (2) ensuring safety and dynamic feasibility.
\end{problem}
\begin{problem}[Verification]\label{prob:verification}
    Verify that for all $x_I \in \mathcal{X}_I$, the system reaches $\mathcal{X}_G$ at timestep $K$ under \eqref{eq:closed_loop_dynamics} while staying in $\mathcal{S}$ for all $k \in \{1, \ldots, K\}$.
\end{problem}

\section{Methodology}\label{sec:method}
We overview our method, SaGe-MP (Fig. \ref{fig:block}), which learns a trajectory tracking controller (Sec. \ref{sec:Controller}), formulates the closed-loop tracking dynamics for a fixed GMP plan as a CG (Sec. \ref{sec:T-NFL}), and performs NN verification on this CG to compute the RSOA and ensure safety (Sec. \ref{sec:RSOA}). Finally, we sample distinct GMP references to construct a library of certified-safe RSOAs and use it to provide a runtime algorithm to preserve the original GMP behavior (Sec. \ref{sec:efficient_fix}). 

\subsection{Learned tracking controller design}
\label{sec:Controller}
Since GMPs typically generate reference trajectories that are not dynamically-feasible, the resulting error must be counteracted, e.g., with replanning or a trajectory-tracking controller. We opt for the latter; in this section, we describe our method for training and formally verifying a tracking controller. Note that we train a \textit{single} small tracking controller that is \textit{trajectory-conditioned}; we do not train a new tracking controller for different references and show that our controller generalizes well to different references in Sec. \ref{sec:results}.

We consider tracking controllers $\pi_\theta: \mathcal{X} \times \mathcal{T} \rightarrow \mathcal{U}$ that map a state $x$ and reference trajectory $\tau \in \mathcal{T}$ to a control input. For brevity, at timestep $k$, we also refer to this controller as $u_k = \pi_{\theta}(e_k)$, where $\pi_{\theta}:\mathbb{R}^{n \times N} \rightarrow \mathbb{R}^m$ has learnable parameters $\theta$ and $e_k$ collects reference errors over a future horizon of $N$ references $p_{k+1},\ldots,p_{k+N}$:
\begin{equation}\label{eq:error_ref}
    e_k \doteq [p_{k+1} - x_k,\ p_{k+2} - x_k,\ \ldots,\ p_{k+N} - x_k],
\end{equation}
Conditioning on future errors makes $\pi_\theta$ focus on reducing tracking error and anticipating changes, specializing to local error correction instead of global planning, which is handled by the expressive GMP. 
Thus $\pi_\theta$ can remain small ($\le 100$ neurons), which aids verification by keeping the unrolled CG compact and the resulting RSOA less conservative (cf. baselines in Figs. \ref{fig:model_sweep}, \ref{fig:NODE_baseline_failure}).
We represent $\pi_\theta$ as a small multi-layer perceptron (MLP) with leaky-ReLU activations and train it to produce dynamically-consistent corrective actions by minimizing the loss $\mathcal{L}(\theta) = \lambda_1 \mathcal{L}_{\text{track}} + \lambda_2 \mathcal{L}_{\text{state}} + \lambda_3 \mathcal{L}_{\text{ctrl}}$, 
which combines a mean squared tracking error $\mathcal{L}_\text{track}$ with smooth penalties for state $(\mathcal{L}_\text{state})$ and control $(\mathcal{L}_\text{ctrl})$ limit violations over a horizon $N$:
\begin{align}
    \mathcal{L}_{\text{track}} &= \frac{1}{N} \textstyle\sum_{j=k}^{k+N} \| x_{j+1} - p_{j+1} \|^2, \\
    \mathcal{L}_{\text{state}} &= \frac{1}{N} \textstyle\sum_{j=k}^{k+N} \left( \sigma(x_{j+1} - x_U) + \sigma(x_L - x_{j+1}) \right), \\
    \mathcal{L}_{\text{ctrl}} &= \frac{1}{N} \textstyle\sum_{j=k}^{k+N} \left( \sigma(u_j - u_U) + \sigma(u_L - u_j) \right),
\end{align}
where $x_k$ is the closed-loop state at timestep $k$ under controller $\pi_\theta$ and $\sigma(\cdot)$ is the softplus function, which penalizes violations of hyper-rectangular state and control bounds. To optimize $\mathcal{L}$, we compute gradients by rolling out $N$-step closed-loop trajectories under the dynamics \eqref{eq:open_loop_dynamics}, which we assume is differentiable.

\subsection{Tracking Neural Feedback Loop (T-NFL)}\label{sec:T-NFL}
We define the Tracking Neural Feedback Loop (T-NFL), denoted $\tnfl$, as a CG that models the evolution of the closed-loop tracking dynamics over the execution horizon $K$. This graph is constructed by unrolling the $K$-fold recursive application of the single-step closed-loop dynamics, $x_{k+1} = f(x_k, \pi_\theta(e_k), w_k)$,  where $\pi_\theta$ is the tracking controller from Sec. \ref{sec:Controller} and $e_k$ denotes the $N$-step lookahead reference tracking error.
Specifically, the T-NFL is a function $\tnfl$ that maps an initial state $x_I \in \mathcal{X}_I$, an \textit{a priori} fixed reference $\tau \in \mathcal{T}$ sampled from the GMP, and a disturbance sequence $\mathbf{w} = \{w_1, \dots, w_{K}\} \in \mathcal{W}^K\subseteq \mathbb{R}^{d \times K}$ to the resulting closed-loop trajectory $\tnfl(x_I,\tau,\mathbf{w}) = [\tnfl_1(x_I,\tau,\mathbf{w}), \ldots, \tnfl_K(x_I,\tau,\mathbf{w})] \doteq \xi = \{x_1, \dots, x_K\} \in \mathbb{R}^{n \times K}$, i.e.,
\begin{equation}
\begin{split}
    \tnfl_k(x_I, \tau, \mathbf{w}) \doteq & \ f(\ldots, 
        f(f(x_I,\tau_{1:N}, w_1), \tau_{2:N+1}, w_2), \\
        & \ldots, \tau_{k-1:k+N-2}, w_k),
\end{split}
\end{equation}
where we denote $\mathcal{W}^K = \mathcal{W} \times \ldots \times \mathcal{W}$ as the $K$-fold Cartesian product of $\mathcal{W}$. In particular, at every timestep $k$, the next $N$ references $p_{k+1}, \ldots, p_{k+N}$ are provided to the tracking controller. To guarantee that $N$ valid future references are always available, we extend the reference trajectory by padding it with the final point $p_K$ until its length reaches $K+N$. The process for generating the closed-loop trajectory is detailed in Alg. \ref{alg:comp_graph} and shown in Fig. \ref{fig:block}.

\begin{algorithm}[ht]
    \caption{T-NFL}
    \label{alg:comp_graph}
    \begin{algorithmic}[1]
            \Require 
                Initial state $x_I$; 
                Reference trajectory $\tau$;
                Disturbance sequence $\mathbf{w} = \{w_1, \dots, w_{K}\}$
            \Ensure Closed-loop trajectory rollout $\xi$.

            \State $x_0 \gets x_I$, \quad $\xi \gets [\;]$
            
            \State \textbf{for} $k = 1, \dots, K$:
            \State \quad $x_{k+1} \gets f(x_k, \pi_{\theta}(p_{k+1}-x_k,\ldots, p_{k+N}-x_k), w_k)$
            \State \quad $\xi \gets [\xi, x_{k+1}]$
            \State \Return $\xi$
    \end{algorithmic}
\end{algorithm}

\subsection{Safety Verification with Reachability Analysis}
\label{sec:RSOA}
To formally verify the safety of a generated plan, we compute an RSOA for the full-horizon closed-loop trajectory using one-shot reachability analysis \cite{Chen2023One-Shot}. We leverage the T-NFL's structure as a single computational graph, $\tnfl$, which represents the $K$-timestep closed-loop system rollout, from $k=0,\ldots,K-1$. This graph is passed to the \texttt{auto\_LiRPA} library \cite{Xu2020AutoLiRPA}, which propagates set-based inputs through the graph to bound the final output set.

\looseness-1Specifically, we encode the uncertain inputs using hyper-rectangular sets. The set of initial states is defined as $\mathcal{X}_I = \{x \mid \|x - x_I\|_\infty \leq \epsilon \}$ and the set of possible disturbances at each step is defined as $\mathcal{W} = \{w \mid \|w\|_\infty \le \overline{w}\}$. In contrast, the reference trajectory $\tau$ is fixed and passed as a deterministic tensor. \texttt{auto\_LiRPA} \cite{Xu2020AutoLiRPA} passes these inputs through $\tnfl$ to find vector-valued affine bounding functions, $\underline{\tnfl}$ and $\overline{\tnfl}$, for the $K$-step closed-loop trajectory $\xi$. 
This enables us to compute the RSOA at each timestep $k$ by considering specific components of these functions, $\underline{\tnfl}_k$ and $\overline{\tnfl}_k$, which bound the state of the closed-loop system at timestep $k$ for all $\mathbf{w} \in \mathcal{W}^k$ and for all initial conditions $x_I \in \mathcal{X}_I$. This hyper-rectangular set, $\hat{R}_k^K$, is found by determining the range of these affine bounding functions over the input sets. Since the functions are affine and the input sets are hyper-rectangles, this optimization is tractable and computed efficiently by \texttt{auto\_LiRPA}:
\begin{equation}\small
    \hspace{-5pt}\hat{R}_k^K(\mathcal{X}_I \mid \tau) \doteq \textsf{Int}\Bigg( \min_{\substack{x_I \in \mathcal{X}_I \\ \mathbf{w} \in \mathcal{W}^k}} \underline{\tnfl}_k(x_I, \tau, \mathbf{w}), \max_{\substack{x_I \in \mathcal{X}_I \\ \mathbf{w} \in \mathcal{W}^k}} \overline{\tnfl}_k(x_I, \tau, \mathbf{w}) \Bigg)\hspace{-3pt}
    \label{eq:RSOA_step_k}
\end{equation}
We can use these RSOAs $\hat{R}_k^K(\mathcal{X}_I \mid \tau)$ to formally verify reach-avoid properties for the closed-loop system. Specifically, we can ensure that the closed-loop system reaches goal set $\mathcal{X}_G$ by checking if $\hat{R}_K^K(\mathcal{X}_I \mid \tau) \subseteq \mathcal{X}_G$ and that the system is safe and satisfies task constraints by checking if $\hat{R}_k^K(\mathcal{X}_I \mid \tau) \subseteq \mathcal{S}$ for all $k \in \{1, \ldots, K\}$. These guarantees hold for all disturbance sequences that may be drawn from $\mathcal{W}^K$ and all initial conditions drawn from $\mathcal{X}_I$.

\subsection{RSOA libraries for multimodality and improved safety}
\label{sec:efficient_fix}

\begin{algorithm}[ht]
    \caption{Generation of RSOA library $\mathcal{C}$}
    \label{alg:safe_sampling}
    \begin{algorithmic}[1]
        \Require
            GMP $P$; initial set $\mathcal{X}_I$; goal set $\mathcal{X}_G$;
            RSOA budget $C$;
            environment $\mathcal{E}$; disturbance set $\mathcal{W}$;
        \Ensure A set of safe GMP references and RSOAs, $\mathcal{C}$.
        
        \State $\mathcal{C} \leftarrow \emptyset$
        \State \textbf{while} $|\mathcal{C}| < C$:
        \State \quad $\tau^c \leftarrow$ \Call{$P$}{$\mathcal{X}_I$, $\mathcal{X}_G$, $\mathcal{E}$}
        \State \quad \textbf{if not} \Call{Collision}{$\tau, \mathcal{E}$}:
        \State \qquad $\hat R^K(\mathcal{X}_I \mid \tau^c)$, success $\leftarrow$ \Call{FindRSOA}{$\mathcal{X}_I, \tau^c, \mathcal{W}$}
        \State \qquad \textbf{if} success \textbf{and not} \Call{Collision}{$\hat R^K(\mathcal{X}_I \mid \tau^c)$, $\mathcal{E}$}:
        \State \qquad \quad Add $(\tau^c, R^K(\mathcal{X}_I \mid \tau^c))$ to $\mathcal{C}$
        \State \textbf{return} $\mathcal{C}$
    \end{algorithmic}
\end{algorithm}

A key advantage of GMPs is their multimodality: producing diverse, viable trajectories for a task, potentially from different homotopy classes. GMPs are typically trained on multimodal expert demonstration data which encodes desirable behaviors. However, when imitated by the GMP, the resulting plans may not be safe or dynamically feasible. Our goal in Sec. \ref{sec:efficient_fix} is to build on the method of Sec. \ref{sec:RSOA} to balance two objectives: 1) preserve similarity to the GMP when safe while 2) increasing the safety rate
when starting from the certified ROA $\mathcal{A}$ with our method relative to the executing the raw GMP samples, avoiding costly retraining.

\noindent\textbf{Preserving multimodality.\quad}While following the closed-loop tracking policy defined by a single certified RSOA (and corresponding reference trajectory) ensures robust constraint satisfaction, it also collapses the closed-loop system into a unimodal trajectory distribution, which may not reflect the potential multimodality of the original GMP distribution.

To address this, we build upon Sec. \ref{sec:RSOA} by building a library $\mathcal{C}$ of $C$ reference trajectories sampled from the GMP and corresponding safe RSOAs $\mathcal{C} \doteq \{(\tau^i, \hat R_{1:K}^K(\mathcal{X}_I \mid \tau^i)\}_{i=1}^C$. At runtime, our method chooses a reference from $\mathcal{C}$ to track in order to better imitate the GMP distribution. 
Our method for building $\mathcal{C}$ is presented in Alg. \ref{alg:safe_sampling}. It repeatedly samples references $\tau^c$ from the GMP $P$ and, after an initial constraint check (ensuring that $\tau^c_K \in \mathcal{X}_G$ and $\tau^c_k \in \mathcal{S}$ for all $k$) and computes the closed-loop RSOA $\hat R_{1:K}^K(\mathcal{X}_I \mid \tau^c)$ for tracking $\tau^c$ under $\pi_\theta(x,\tau^c)$. If the RSOA satisfies the constraints, i.e., $\hat R_K^K(\mathcal{X}_I \mid \tau^c)\subseteq \mathcal{X}_G$ and $\hat R_k^K(\mathcal{X}_I \mid \tau^c)\subseteq \mathcal{S}$ for all $k$, it is added to the library $\mathcal{C}$; if not, the process repeats until $C$ safe RSOAs are found; $C$ is a maximum RSOA computation budget and can be tuned based on observed imitation error. For efficiency, this process can be parallelized on the GPU across multiple references $\tau^c$.

\looseness-1At execution time, a new reference $\tau^\textrm{samp}$ starting from $x_I^\textrm{samp}$ is sampled from the GMP $P$. We select the closest reference trajectory in $\mathcal{C}$, as measured by the minimum $\ell_2$ error
\begin{equation}\label{eq:closest_error}
E_\tau(\tau^\textrm{samp}, \mathcal{C}) \doteq \min_{i\in \{1,\ldots,C\}} \textstyle\sum_{k=1}^{K}\Vert \tau_k^i -\tau_k^\textrm{samp}\Vert_2^2,
\end{equation}
where we denote $i^*$ as the $\arg\min$ of \eqref{eq:closest_error}. Finally, to obtain a safely-stabilizable, dynamically-feasible trajectory, we evaluate the closed-loop dynamics obtained by tracking $\tau^{i^*}$ under $\pi_\theta$, yielding the closed-loop trajectory $\xi =\tnfl(x_I^\textrm{samp}, \tau^{i^*}, \mathbf{w})$. 

\noindent \textit{Theoretical Analysis.\quad} In the following, for the specific case where the open-loop dynamics \eqref{eq:open_loop_dynamics} are deterministic, i.e., $x_{t+1} = f(x_t, u_t)$, we derive finite-sample bounds on the rate at which the closed-loop imitation error 
\begin{equation}\label{eq:closed_loop_imitation_error}
    E_\xi(\tau^\textrm{samp}, \mathcal{C}) \doteq \Vert \tnfl(x_I^\textrm{samp}, \tau^{i^*}) - \tnfl(x_I^\textrm{samp}, \tau^\textrm{samp})\Vert
\end{equation} decreases as the size of the library $\mathcal{C}$ increases. At a high level, this result shows that if $\tau^\textrm{samp}$ can be safely stabilized under $\pi_\theta$, as $C$ increases, $\tnfl(x_I^\textrm{samp}, \tau^{i^*})$ converges to $\tnfl(x_I^\textrm{samp}, \tau^\textrm{samp})$ -- the trajectory that would have been executed by directly tracking $\tau^\textrm{samp}$. That is, as the library grows, our method becomes less invasive and more closely approximates the GMP trajectory distribution.
\begin{definition}[$\mu$-safe stabilization]
	A reference $\tau$ starting from $x_I$ to be $\mu$-safely stabilizable if $\hat R_k^K(\B_\mu(x_I) \mid \tau) \subseteq \mathcal{S}$ for all $k \in \{1, \ldots, K\}$ and $\hat R_K^K(\B_\mu(x_I) \mid \tau) \subseteq \mathcal{X}_G$.
\end{definition}

Here, $L$ is the Lipschitz constant of the T-NFL with respect to the reference, i.e., $L$ satisfies $\Vert \tnfl(x_I,\tau) - \tnfl(x_I,\tau^\textrm{samp})\Vert \le L\Vert \tau - \tau^\textrm{samp}\Vert$. The following result proves a bound on $C$ such that with probability at least $1-\alpha$, the error between the original and certified closed-loop trajectories \eqref{eq:closed_loop_imitation_error} $\Vert \xi^\textrm{samp} - \xi^{i^*}\Vert$ is at most $\epsilon$, i.e., $\textrm{Pr}\big[E_\xi(\tau^\textrm{samp}, \mathcal{C} ) \le \epsilon\big] \ge 1-\alpha$:
\begin{theorem}[Minimal invasiveness]
	Assume that A) the open-loop dynamics \eqref{eq:open_loop_dynamics} are deterministic, B) \eqref{eq:open_loop_dynamics} and $\pi_\theta$ are Lipschitz continuous in all arguments, and C) Alg. \ref{alg:safe_sampling} samples reference trajectories i.i.d. from the GMP $P$. Let $\epsilon$ be a pre-specified error threshold and $\tau^\textrm{samp}$ be an $\mu$-safely stabilizable reference generated from $P$ with the properties that A) for $\delta = \epsilon/L$, $\textrm{Pr}(P \textrm{ draws sample from } \B_\delta(\tau^\textrm{samp})) \doteq p_\textrm{samp} \in (0, 1]$ and that B) for all $\tau \in \B_\delta(\tau^\textrm{samp})$, $\tau$ is also $\mu$-safely stabilizable. After $C$ i.i.d. draws from $P$, 
	\begin{equation}
		\textrm{Pr}\big[E_\xi(\tau^\textrm{samp}, \mathcal{C}) \le \epsilon\big] \ge 1-\exp(-p_\textrm{samp}C),
	\end{equation}
	and to ensure that $\textrm{Pr}\big[E_\xi(\tau^\textrm{samp}, \mathcal{C}) \le \epsilon\big] \ge 1-\alpha$ for $\alpha \in (0, 1)$, one can select $C \ge \log(1/\alpha)/p_\textrm{samp}$.
\end{theorem}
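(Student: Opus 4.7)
The plan is to reduce the claim to a coupon-collector style argument that combines (i) the Lipschitz stability of $\tnfl$ in its reference argument with (ii) the fact that every GMP draw landing in $\B_\delta(\tau^\textrm{samp})$ is guaranteed to survive the filters of Alg.\ \ref{alg:safe_sampling} and enter $\mathcal{C}$.

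First I would establish a deterministic implication: if at least one library reference $\tau^i \in \mathcal{C}$ lies in $\B_\delta(\tau^\textrm{samp})$, then the nearest-library reference $\tau^{i^*}$ (defined via the $\arg\min$ in \eqref{eq:closest_error}) also satisfies $\Vert \tau^{i^*} - \tau^\textrm{samp}\Vert \le \delta$. Because the open-loop dynamics are deterministic (assumption A) and $f,\pi_\theta$ are Lipschitz (assumption B), compositions through the T-NFL are Lipschitz in the reference argument with constant $L$ (which is why the statement introduces $L$ right before the theorem). Thus $E_\xi(\tau^\textrm{samp},\mathcal{C}) = \Vert\tnfl(x_I^\textrm{samp},\tau^{i^*}) - \tnfl(x_I^\textrm{samp},\tau^\textrm{samp})\Vert \le L\delta = \epsilon$, so the event $\{E_\xi \le \epsilon\}$ contains the event $\{\mathcal{C}\cap \B_\delta(\tau^\textrm{samp}) \neq \emptyset\}$.

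Next I would argue the inclusion step: every i.i.d.\ draw $\tau^c \sim P$ that lies in $\B_\delta(\tau^\textrm{samp})$ is accepted into $\mathcal{C}$. This is where the second hypothesis on $\tau^\textrm{samp}$ is essential: all $\tau \in \B_\delta(\tau^\textrm{samp})$ are $\mu$-safely stabilizable, so the RSOA $\hat R_{1:K}^K(\mathcal{X}_I \mid \tau^c)$ satisfies both the safety and goal containment checks in lines 5--6 of Alg.\ \ref{alg:safe_sampling} (and, interpreting the reference-collision check as subsumed by the RSOA containment in $\mathcal{S}$, the line 4 check as well). By the i.i.d.\ assumption (C), each of the $C$ draws independently lies in $\B_\delta(\tau^\textrm{samp})$ with probability exactly $p_\textrm{samp}$, so the probability that \emph{none} of them does is at most $(1-p_\textrm{samp})^C \le \exp(-p_\textrm{samp} C)$ by the elementary inequality $1-x \le e^{-x}$. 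Combining this with the deterministic implication of the first step yields the stated lower bound on $\Pr[E_\xi \le \epsilon]$, and inverting $\exp(-p_\textrm{samp}C) \le \alpha$ gives $C \ge \log(1/\alpha)/p_\textrm{samp}$.

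The main obstacle is purely bookkeeping rather than mathematical: aligning the $C$ in the theorem statement (number of raw i.i.d.\ draws from $P$) with the budget interpretation of $C$ inside Alg.\ \ref{alg:safe_sampling} (number of accepted entries in $\mathcal{C}$), and carefully invoking the second hypothesis to ensure the acceptance probability of a $\B_\delta$-nearby sample is exactly $1$ rather than some smaller value that would degrade the bound to $(1 - q\,p_\textrm{samp})^C$ for an unknown filter survival rate $q < 1$. A secondary subtlety is that the Lipschitz constant $L$ implicit in $\delta = \epsilon/L$ must be the Lipschitz constant of the composed T-NFL over the full horizon $K$; this can be obtained by induction over timesteps using the Lipschitz constants of $f$ and $\pi_\theta$ from assumption B, but the resulting $L$ can be large and is the quantity that ultimately drives how loose the sample-complexity bound is in practice.
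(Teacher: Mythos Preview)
Your proposal is correct and follows essentially the same route as the paper: (i) use the $\mu$-safe stabilizability of every $\tau\in\B_\delta(\tau^\textrm{samp})$ to guarantee that any such draw is accepted into $\mathcal{C}$, (ii) bound the probability that none of $C$ i.i.d.\ draws hits $\B_\delta(\tau^\textrm{samp})$ by $(1-p_\textrm{samp})^C\le e^{-p_\textrm{samp}C}$, and (iii) convert membership $\tau\in\B_\delta(\tau^\textrm{samp})$ into $E_\xi\le L\delta=\epsilon$ via the Lipschitz property of $\tnfl$ in its reference argument. Your additional remarks on the draws-versus-accepted-entries bookkeeping and on obtaining $L$ by composing the per-step Lipschitz constants of $f$ and $\pi_\theta$ are more explicit than the paper's own argument but do not change the structure.
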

\begin{proof}
	If $\tau^\textrm{samp}$ is $\mu$-safely stabilizable, and the probability of sampling $\tau \in \B_\delta(\tau^\textrm{samp})$ is $p_\textrm{samp}$ per draw, the chance of zero successes in $C$ i.i.d. trials is $1-(1-p_\textrm{samp})^C \ge 1-\exp(-p_\textrm{samp}C)$. When some $\tau \in \B_\delta(\tau^\textrm{samp})$ is sampled, it will be added to $\mathcal{C}$ as it is $\mu$-safely stabilizable. Then, at runtime, when $\tau^\textrm{samp}$ is sampled, from the definition of Lipschitz continuity, we have that $\Vert \tnfl(x_I,\tau) - \tnfl(x_I,\tau^\textrm{samp})\Vert \le L\Vert \tau - \tau^\textrm{samp}\Vert$ for all $\tau$. $L$ is guaranteed to exist and be finite since $f$ and $\pi_\theta$ are Lipschitz and $f$ is deterministic. Since $\delta = \epsilon/L$ and there exists some $\tau$ in $\mathcal{C}$ with probability at least $1-\exp(-p_\textrm{samp} C)$ that satisfies $\Vert \tau - \tau^\textrm{samp}\Vert \le \delta$, we have that $E_\xi(\tau^\textrm{samp}, \mathcal{C}) \le \Vert \tnfl(x_I,\tau) - \tnfl(x_I,\tau^\textrm{samp})\Vert \le L\Vert \tau - \tau^\textrm{samp}\Vert \le L\delta \le \epsilon$.
\end{proof}
In Sec. \ref{sec:results}, we empirically validate that by collecting even a small number of RSOAs in $\mathcal{C}$, we can reduce the trajectory distribution shift with respect to the GMP $P$.

\begin{algorithm}[ht]
    \caption{Safe Multimodal Runtime Execution}
    \label{alg:safe_runtime_execution}
    \begin{algorithmic}[1]
        \Require 
            initial state $x_I$; 
            GMP reference $\tau^\textrm{samp}$;
            library $\mathcal{C}$;
            tracking controller $\pi_\theta$;
            disturbances $\mathbf{w} = \{w_1, \dots, w_{K}\}$
        \Ensure Closed-loop trajectory rollout $\xi$.

        \State $E_\tau(\tau^\textrm{samp}, \mathcal{C}), i^* \gets$ Solve \eqref{eq:closest_error} 
        \State $\tau^{i} \gets$ Reference $i^*$ from $\mathcal{C}$
        
        \State $\xi \gets \tnfl(x_I, \tau^{i^*}, \mathbf{w})$
        
        \State \Return $\xi$
    \end{algorithmic}
\end{algorithm}

\noindent\textbf{Verifying large sets of initial conditions.\quad}
\looseness-1We can also extend our approach to verify safe forward invariance for large initial condition sets $\mathcal{X}_I$ by blending multiple GMP-generated reference trajectories with the tracking controller $\pi_\theta$. In contrast, na\"ively stabilizing a large $\mathcal{X}_I$ with $\pi_\theta$ around a single reference causes the RSOA to grow out of bounds in practice.

To prevent this, we sample multiple reference trajectories $\tau^i$ from the GMP, each with different initial conditions $x_I^i \in \mathcal{X}_I$, and stabilize a smaller neighborhood $\mathcal{X}_I^i \doteq \B_\epsilon(x_I^i) \subseteq \mathcal{X}_I$ of initial conditions to $\tau^i$ using $\pi_\theta$. This relies on only local trajectory robustness, enabling the tracking controller to remain relatively simple. As in Alg. \ref{alg:safe_sampling}, we can form a library of references and RSOAs $\mathcal{C} \doteq \{(\tau^i, \hat R^K(\mathcal{X}_I^i \mid \tau^i)\}_{i=1}^C$. Then, by selecting these initial sets to partition $\mathcal{X}_I$, i.e., $ \mathcal{X}_I = \bigcup_{i=1}^C \mathcal{X}_I^i$, at execution time, if $x_I \in \mathcal{X}_I^i$, we can use $\mathcal{C}$ and track $\tau^i$ using $\pi_\theta$; the corresponding RSOA certifies that $\tilde{f}$ stays in $\mathcal{S}$ for all $k \in \{1, \ldots, K\}$. 

\section{Results}\label{sec:results}
In this section, we compare to baselines (Sec. \ref{sec:results_baselines}), evaluating on nonlinear and learned unicycle and quadcopter dynamics (Sec. \ref{sec:results_unicycle}-\ref{sec:results_3dquad}), and demonstrating our method on multimodal GMPs and verification of large $\mathcal{X}_I$ sets (Sec. \ref{sec:results_multimodal}-\ref{sec:results_large}.
were conducted on a desktop computer equipped with an Intel i9-14900K CPU, 64GB of RAM, and an NVIDIA RTX 4090 GPU.
For all GMPs other than the pre-trained VLM, we generated training data via trajectory optimization. See App. \ref{app:parameters} for  further experiment details.

\subsection{Baseline Comparisons}\label{sec:results_baselines}
We motivate our method by highlighting three challenges in improving the safety of learned planners. First, large NN tracking controllers yield loose RSOAs as computed by NNV tools, motivating smaller controllers. Second, sampling-based verification of RSOAs is highly sample-inefficient over long horizons, motivating NNV tools. Finally, neural CBFs safety filters can be unsound and lead to violations, motivating formally verified controllers.

\noindent \textbf{Loose RSOAs for large NNs.\quad} We illustrate the tradeoff between NN size and closed-loop RSOA volume. As shown in Fig.\ref{fig:model_sweep}, larger NNs improve expert data recreation error (blue) and shrink the exact reachable set $R_k^K(\mathcal{X}_I)$ but enlarge the computed RSOA $\hat R_k^K(\mathcal{X}_I)$ due to the resulting NNV conservativeness (red). For example, Fig. \ref{fig:NODE_baseline_failure} shows a neural ODE planner with three hidden layers of width 256 yields a conservative RSOA that grows unbounded while the trajectories remain bounded.

\begin{figure}[htbp]
    \vspace{-8pt}
    \centering
    \begin{minipage}[c]{0.55\linewidth}
        \includegraphics[width=0.95\linewidth]{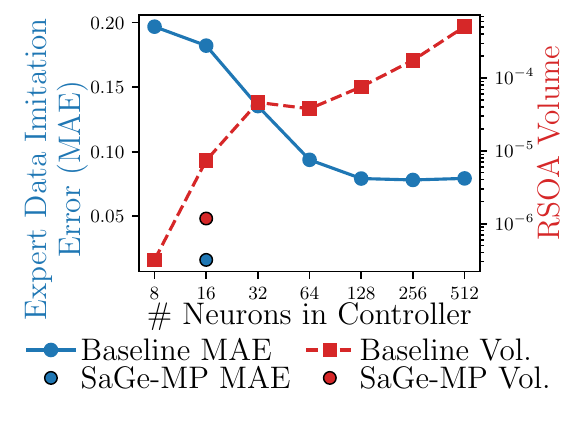}
    \end{minipage}
    \begin{minipage}[c]{0.45\linewidth}
        \caption{We visualize the trade-off between controller capacity and RSOA volume. While larger models (more neurons) reduce tracking error, they inflate the reachable set volume, leading to more conservative guarantees.}
        \label{fig:model_sweep} 
    \end{minipage}
    \vspace{-12pt}
\end{figure}

\begin{figure}[ht]
 \vspace{-5pt}
  \begin{minipage}[c]{0.63\linewidth}
    \includegraphics[width=0.97\linewidth]{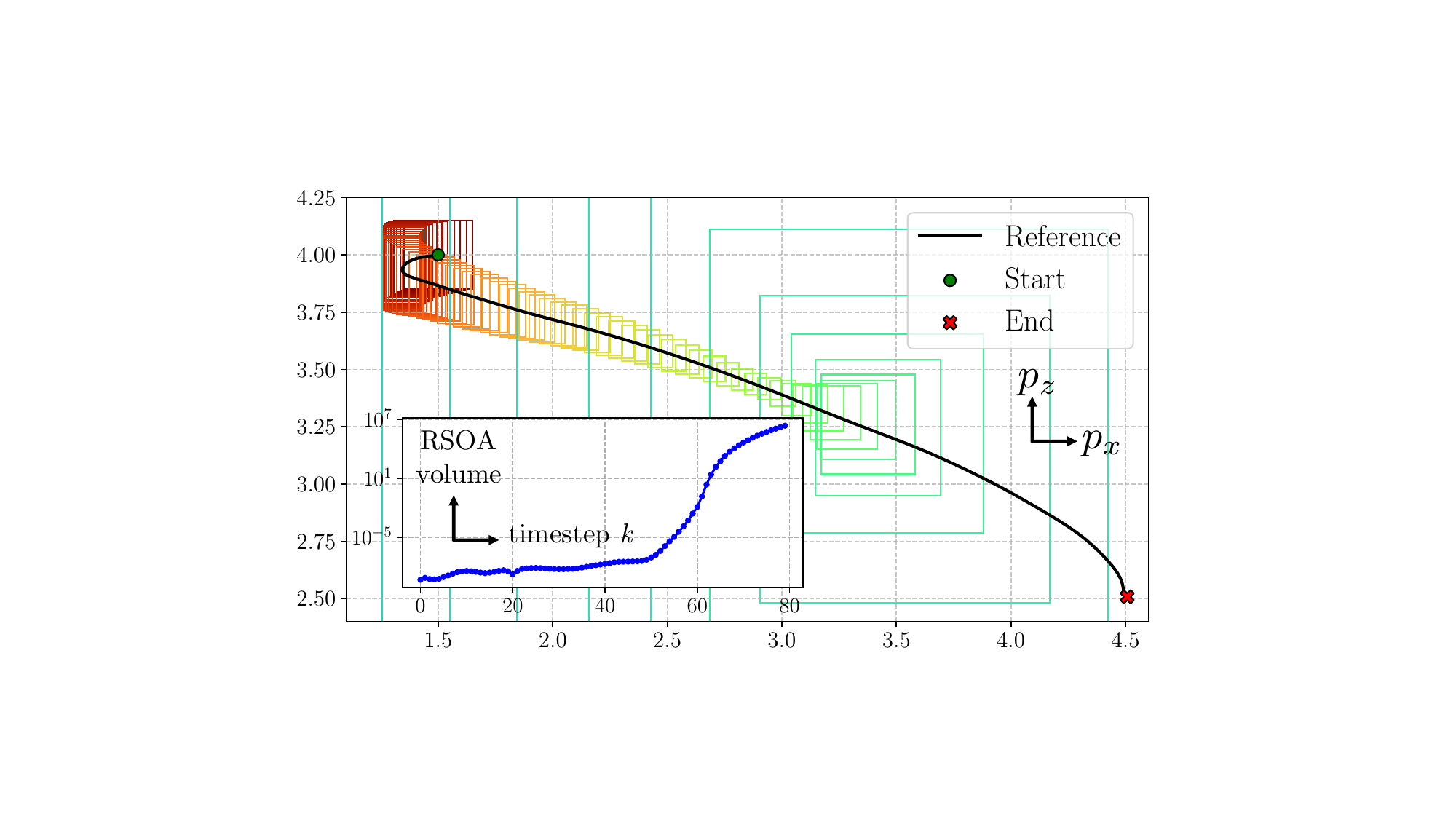}
  \end{minipage}\begin{minipage}[c]{0.37\linewidth}
    \caption{RSOA for neural ODE (756 neurons). The rapid RSOA growth shows a looseness issue for NNV with larger models; this is reflected in the RSOA volumes as well (inset).}
    \label{fig:NODE_baseline_failure}
  \end{minipage}
  \vspace{-3pt}
\end{figure}

\noindent \textbf{Inefficiency of sampling-based safety analysis.} Sampling-based methods are computationally inefficient at uncovering rare but critical failures. In a 2D quadcopter scenario (Fig. \ref{fig:2dquad_failure}), our NN tracking controller follows a $K = 100$-length reference trajectory (black) generated by a neural ODE while avoiding an obstacle (red). After $10^6$ simulations, only seven collisions were found, whereas the computed RSOA intersects the obstacle, immediately certifying the possibility of a violation. The high cost of sampling highlights its limitations and the advantage of reachability-based formal verification.

\begin{figure}[ht]
 \vspace{-3pt}
  \begin{minipage}[c]{0.55\linewidth}
    \includegraphics[width=0.95\linewidth]{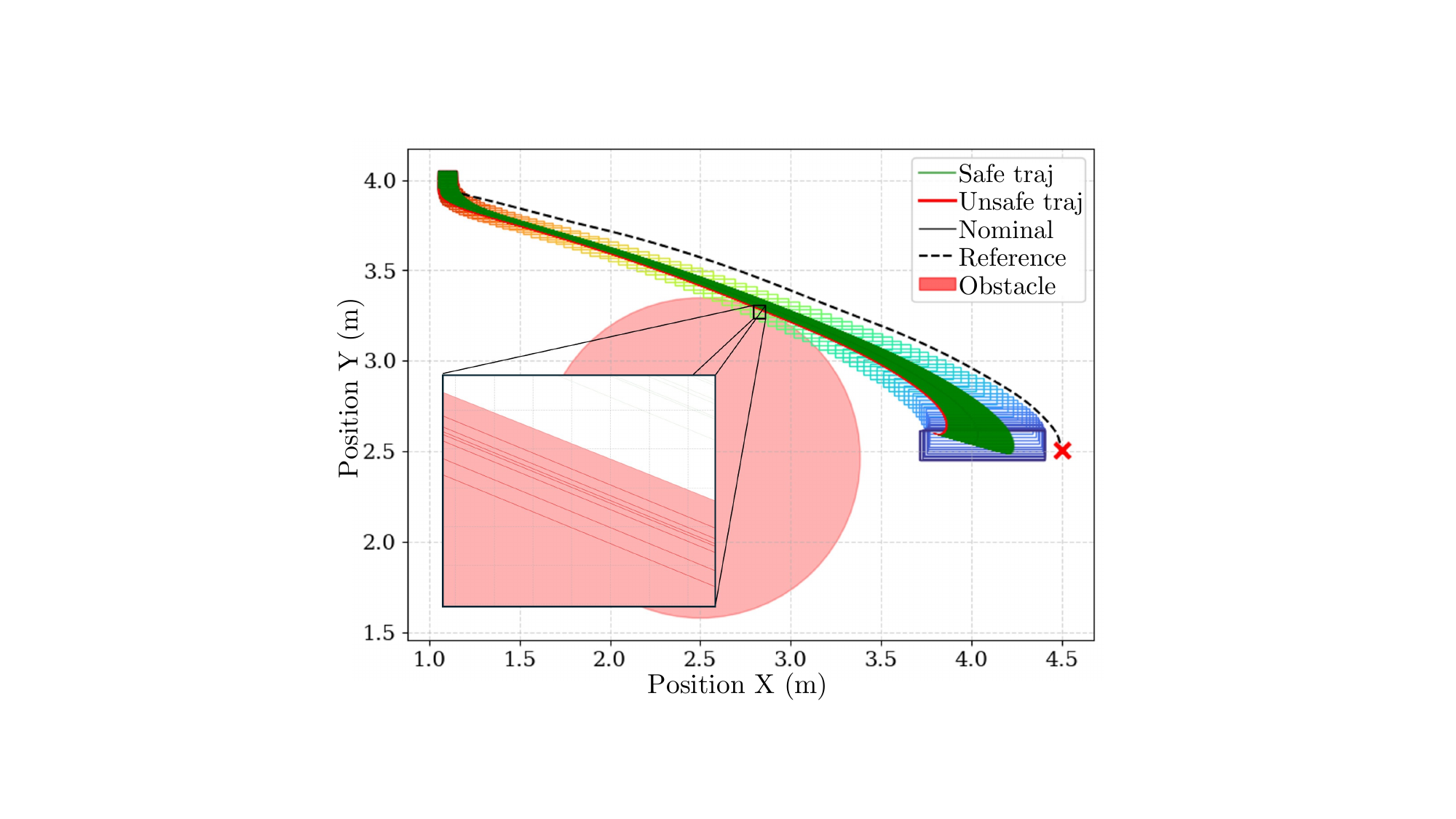}
  \end{minipage}\begin{minipage}[c]{0.45\linewidth}
    \caption{Limitations of sampling-based safety analysis. While $10^6$ random simulations uncovered just seven collisions over a $K=100$ horizon, our RSOA-based method can directly certify the potential for collision.}\label{fig:2dquad_failure}
  \end{minipage}
  \vspace{-8pt}
\end{figure}

\noindent \textbf{Unsafe behavior from learned safety filters.\quad} A baseline neural CBF \cite{Dawson2022NeuralCBF} fails to reliably enforce safety. In tracking a neural ODE-generated reference under unicycle dynamics (Fig. \ref{fig:cbf_fail}), the learned CBF overrides the controller when inputs appear unsafe, yet only 46.4\% of 1000 rollouts from $\mathcal{X}_I$ remain safe (orange). This is because typically neural CBFs only enforce the CBF conditions on sampled states \cite{Dawson2022NeuralCBF}, compromising the guarantees. In contrast, our method formally certifies that the closed-loop system is 100\% safe.

\begin{figure}[ht]
 \vspace{-3pt}
  \begin{minipage}[c]{0.63\linewidth}
    \includegraphics[width=0.97\linewidth]{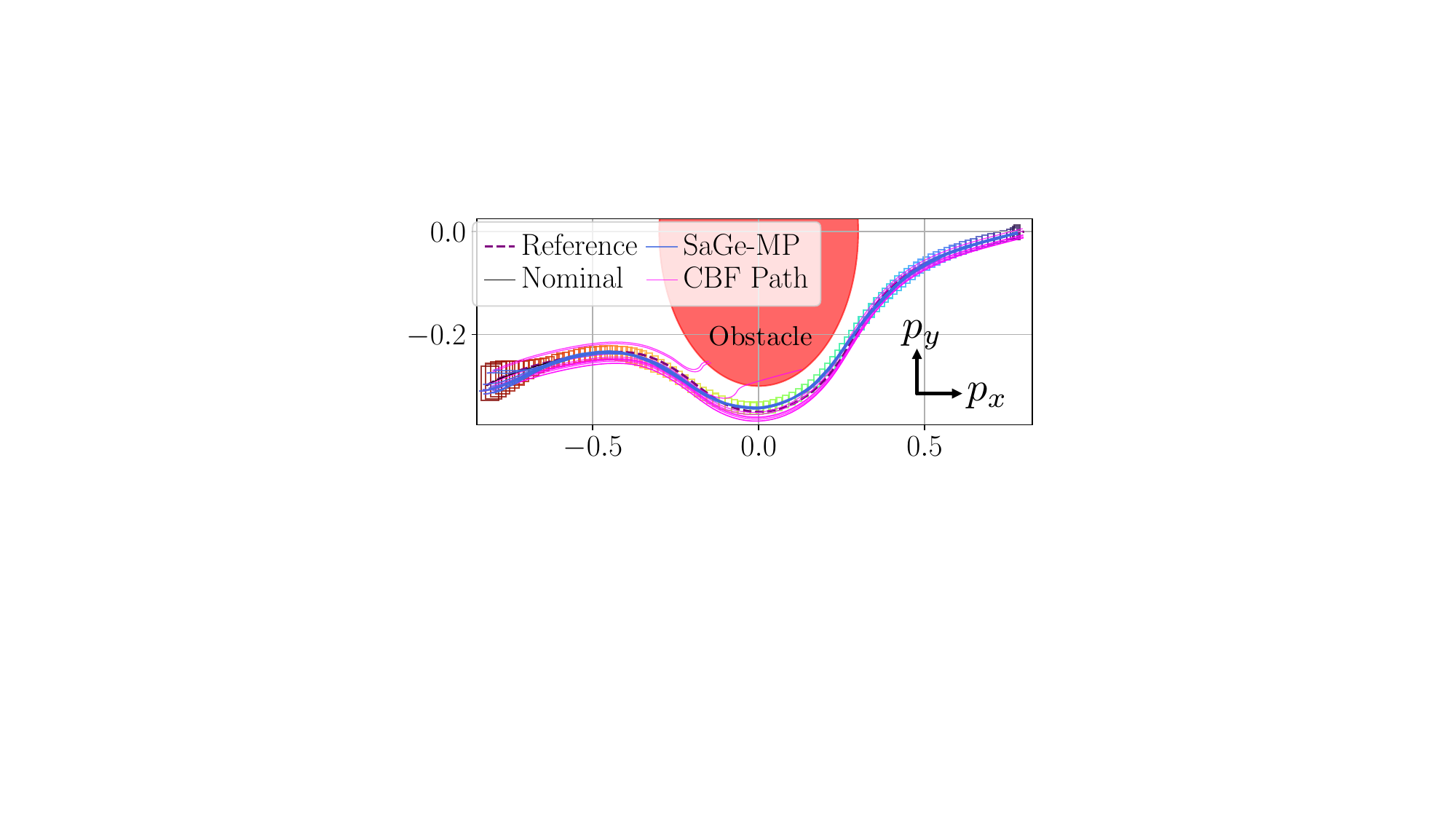}
  \end{minipage}\begin{minipage}[c]{0.37\linewidth}
    \caption{A neural CBF baseline (pink) frequently fails to ensure safety, while SaGe-MP (blue) guarantees safety.}
    \label{fig:cbf_fail}
  \end{minipage}
\vspace{-14pt}
\end{figure}

\subsection{Unicycle}\label{sec:results_unicycle}
\noindent\textbf{Simulation results on analytical and learned dynamics.\quad} We evaluate our method on unicycle dynamics \cite[Eq. 13.18]{DBLP:books/cu/L2006}, discretized with a $0.1s$ forward-Euler step, tracking trajectories for $K=100$ timesteps. To test applicability to learned models, we train an MLP (3 layers, 64 neurons) on $2\times10^6$ transitions from the ground truth. A NODE planner is trained on ground-truth trajectories, and a tracking controller $\pi_\theta$ with two hidden layers of width 8 is trained to follow them; the same controller is used in both cases.

\begin{figure}[h]
    \setlength{\intextsep}{0pt}
    \setlength{\textfloatsep}{0pt}
    \includegraphics[width=\linewidth]{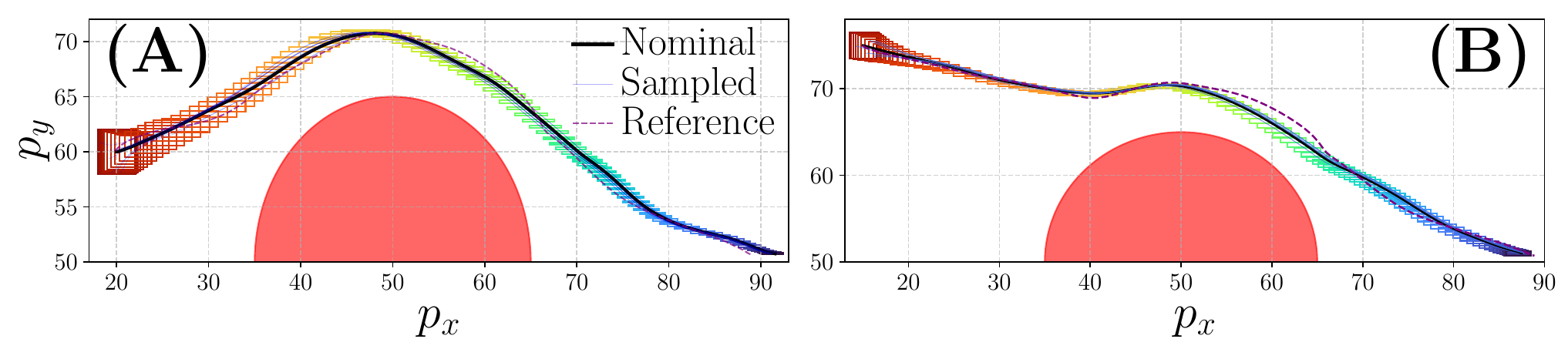}
    \vspace{-2em}
    \caption{Simulated unicycle RSOAs, with references generated by (A) NODE + analytical dynamics, (B) NODE + learned dynamics.}
    \label{fig:unicycle_combined}
    \vspace{-2pt}
\end{figure}

As shown in Fig. \ref{fig:unicycle_combined}, under both analytical (a) and learned (b) dynamics, our T-NFL safely tracks neural ODE reference trajectories, producing RSOAs that formally verify safety and goal-reaching. RSOA computation times were $100.78s$ and $51.95s$, respectively. This experiment shows that our method can follow NODE-generated plans and certify closed-loop safety for both analytical and learned dynamics.

\begin{figure}[h]
    \centering
    \includegraphics[width=0.9\linewidth]{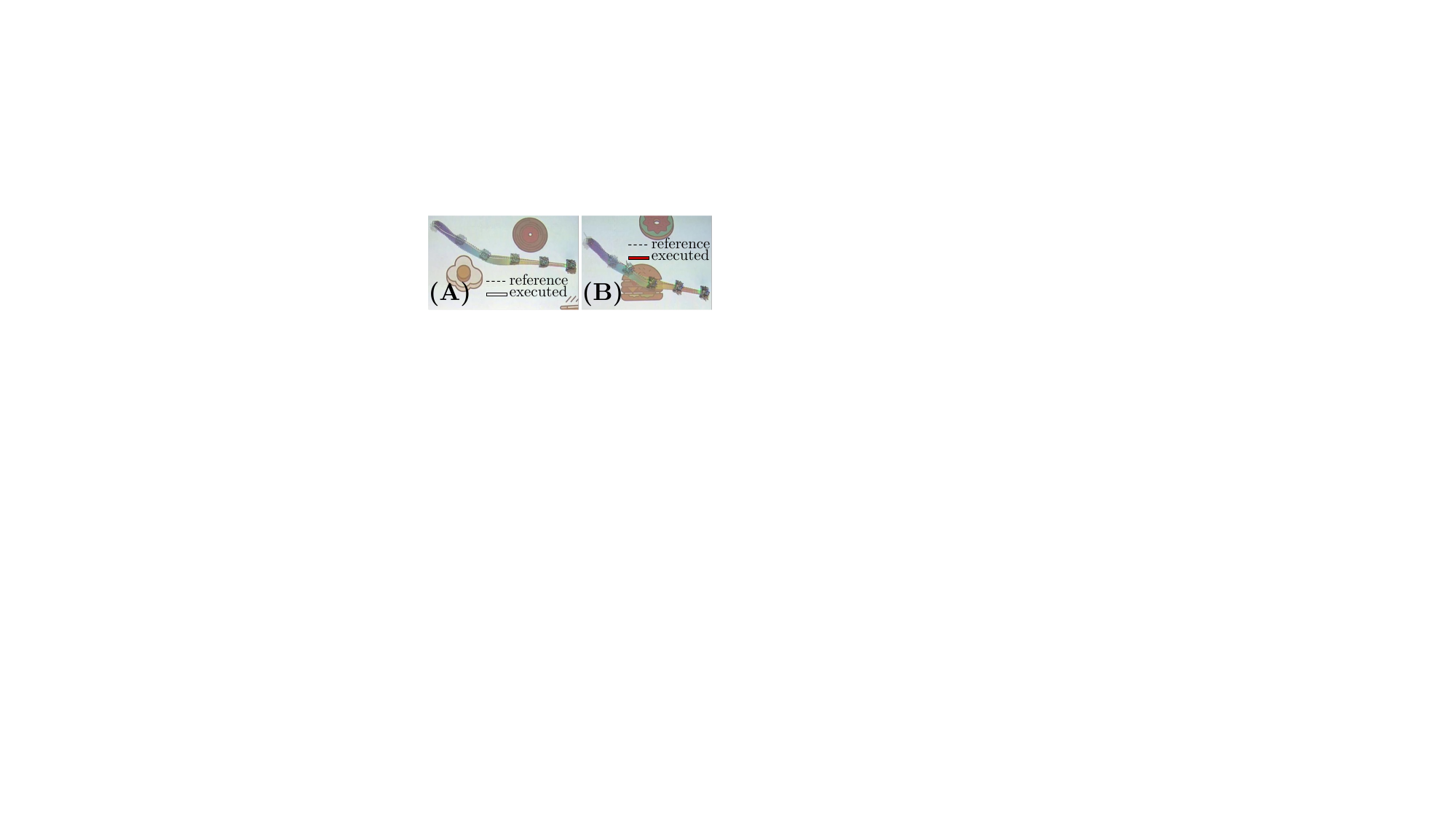}
    \caption{Hardware deployment: a plan is generated from an image of the scene and a prompt using a VLM, which is safely executed by our controller on an differential-drive robot.}
    \label{fig:real_robot_vlm}
    \vspace{-2pt}
\end{figure}

\noindent \textbf{Hardware results with natural language task constraints.\quad} We demonstrate our framework on a physical wheeled robot (Fig. \ref{fig:real_robot_vlm}). A VLM (Gemini 2.5 Pro) \cite{geminiteam2023gemini} generated reference trajectories from images and text prompts. We evaluate two examples: one where the VLM generates a collision-free path to the goal (Fig. \ref{fig:real_robot_vlm}a), and another where it reaches the target while moving over a food item and treating other objects as obstacles (Fig. \ref{fig:real_robot_vlm}b). See App. \ref{app:parameters} for prompt details. 

The VLM produces reference trajectories as discussed in Sec. \ref{sec:prelim_gmps}. The same tracking controller from the simulation experiments is used to follow these trajectories. At runtime on hardware, the vehicle stayed within the computed RSOA (Fig. \ref{fig:real_robot_vlm}). Despite being trained only on NODE-generated references, the controller successfully tracked VLM-generated trajectories, with formal verification providing guaranteed RSOA computation even out-of-distribution. These results show that our controller generalizes to unseen reference trajectory distributions and safely stabilizes plans from images and language, ensuring task completion from any state in $\mathcal{X}_I$.

\vspace{-6pt}
\subsection{Planar Quadcopter}\label{sec:results_2dquad}
Next, we evaluate our framework on a planar, 6D quadcopter model (App. \ref{app:parameters}), time-discretized with a timestep of $0.05$s for a horizon of $K = 100$. The task is to navigate a $5 \times 5 \; m^2$ workspace containing a circular obstacle.

\vspace{-8pt}
\begin{figure}[h]
  \setlength{\intextsep}{0pt}
  \setlength{\textfloatsep}{0pt}
  \centering
  \includegraphics[width=0.9\linewidth]{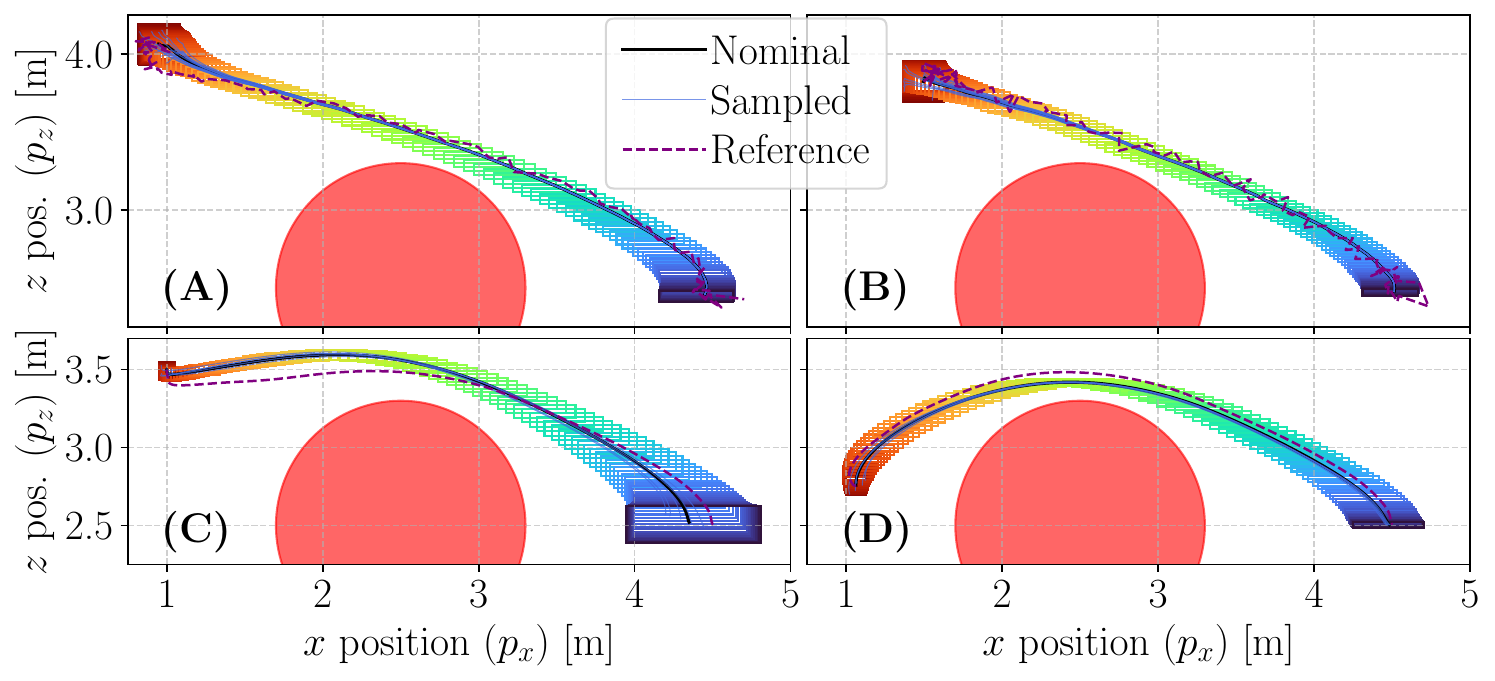}
  \vspace{-0.8em}
  \caption{\looseness-1RSOAs computed for the planar quadcopter for various planners, verifying safety: (A) diffusion + analytical dynamics (AD); (B) CFM + AD; (C) NODE + learned dynamics; (D) NODE + AD.}
  \label{fig:2dquad_combined}
\end{figure}

\noindent\textbf{Analytical dynamics.\quad}As shown in Fig. \ref{fig:2dquad_combined}, our framework certifies plans from diverse learned planners, including diffusion, CFM, and neural ODE models. The tracking controller (2 hidden layers, 8 neurons each) reliably follows the generated trajectories, yielding collision-free closed-loop RSOAs and certifying safety for all $x_I \in \mathcal{X}_I$. RSOA computation times are comparable to before, averaging $123.21$s for analytical dynamics and $59.85$s for the learned model, using 4 and 25 initial partitions, respectively. 

As a surrogate for measuring the increased safety rate when starting from the certified ROA with our method relative to the GMP, we consider a fixed input set $\mathcal{X}_I$ and compute empirical safety rates for 1) open-loop and 2) closed-loop execution of GMP plans. Out of 126 references sampled from the diffusion planner, 1) 78.5\% and 2) 94.4\% were safe starting from $\mathcal{X}_I$. For the same input set, we also evaluated the CFM-based planner, which yielded 68.5\% and 81.5\% safety out of 108 samples. In contrast, we are able to certify 100\% safety; i.e., $\mathcal{X}_I \subseteq \mathcal{A}$ for diffusion and CFM. 

Overall, this experiment suggests that SaGe-MP stabilizes noisy, dynamically-infeasible references from diffusion and CFM (Fig. \ref{fig:2dquad_combined}, purple), increasing the safety rate by certifying the ROA and effectively ``projecting" them onto the set of feasible trajectories that safely reach the goal, improving on the approximate projection of ~\cite{Bouvier2025DDAT}.

\noindent\textbf{Learned dynamics.\quad}To illustrate scalability to higher-dimensional learned dynamics, we trained an NN dynamics model (3 hidden layers, 64 neurons each) from transitions of the analytical model, along with a tracking controller (2 layers, 16 neurons each). The resulting closed-loop RSOAs again show safe tracking (Fig. \ref{fig:2dquad_combined}c). These results highlight the planner-agnostic nature of our method and ability to safely track trajectories from GMPs on underactuated dynamics.

\vspace{-1pt}
\subsection{3-D Quadcopter}\label{sec:results_3dquad}

We test scalability on an analytical 12-state 3D quadcopter model \cite{sabatino2015quadrotor}, discretized with a $0.05$s timestep over a $K=100$ horizon. The task is set in a $5 \times 5 \times 5$ m$^3$ workspace with a spherical obstacle of radius $1$m. We train a tracking controller $\pi_\theta$, with 2 hidden layers of 8 neurons each, to track a diffusion-generated reference. As shown in Fig. \ref{fig:3dquad_gt_plot}, the computed RSOA guarantees safety: the controller tracks the reference trajectory while the certified RSOA remains collision-free, computed in $2153.45$s with 4 uniform partitions of $\mathcal{X}_I$.
This demonstrates that our method can scale to certifiably stabilize GMP-generated references on high-dimensional, underactuated nonlinear systems.

\subsection{Safe, Multimodal Planning}\label{sec:results_multimodal}

\begin{figure}[h]
    \centering
    \includegraphics[width=\linewidth
    ]{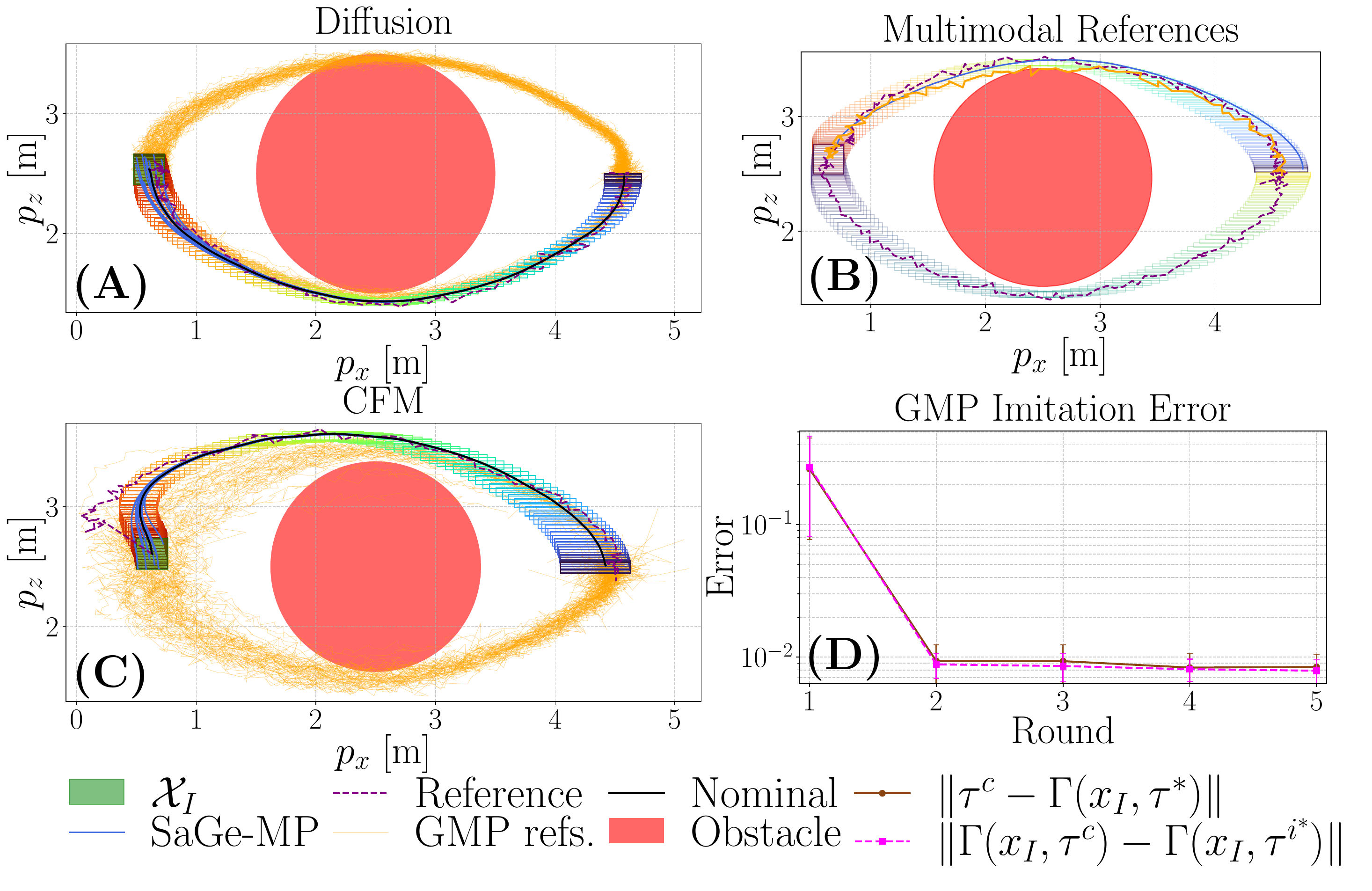}
    \vspace{-20pt}
    \caption{We find one safe RSOA via Alg. \ref{alg:safe_sampling} for diffusion (A) and CFM (C) planners. We illustrate our trajectory library approach (B), which can capture multiple modes of the GMP and reduce the imitation error relative to the original GMP (D).}
    \label{fig:patching}
\end{figure}

\looseness-1We validate that generating multiple plans from $\mathcal{X}_I$ preserves the multimodality of the GMP. For analytical planar quadcopter dynamics (Fig. \ref{fig:patching}c), at a deliberately selected $\mathcal{X}_I$, the diffusion-based planner produces two modes - above and below the obstacle. We sample five safe references from the GMP and compute their RSOAs, visualizing two in Fig. \ref{fig:patching}c. At runtime, an initial state $x_I \in \bigcup_{i=1}^C \B_\epsilon(x_I^i)$ is measured, and the GMP is queried to return a reference trajectory that starts from $x_I$. We follow Alg. \ref{alg:safe_runtime_execution}, showing that tracking the selected reference with $\pi_\theta$ can well-imitate the GMP trajectory distribution even with $C = 2$. As $C$ increases, the imitation error \eqref{eq:closed_loop_imitation_error} decreases (Fig. \ref{fig:patching}d); the raw error between the original GMP reference $\tau$ and our closed-loop rollout $\tnfl(x_I, \tau^{i^*})$ also decreases. Similarly, with analytical unicycle dynamics and a VLM-based GMP, our method certifies safe RSOAs around several sampled references; three are shown in Fig. \ref{fig:VLM_multimodal}a. Moreover, as a surrogate for measuring the increase in safety rate when starting from the certified ROA $\mathcal{A}$ relative to the GMP, we are able to certify 100\% safety, i.e., $\mathcal{X}_I \subseteq \mathcal{A}$, while na\"ively sampling and stabilizing plans with $\pi_\theta$ from the VLM leads to only 16\% safety over 50 trials, shown in Fig. \ref{fig:VLM_multimodal}b. These results suggest our method can capture multimodal GMP behavior while improving safety rates relative to the original GMP. Moreover, this shows that the invasiveness of our approach decreases as $C$ increases, with small error even for small $C$.

\vspace{-4pt}
\begin{figure}[h]
    \includegraphics[width=\linewidth]{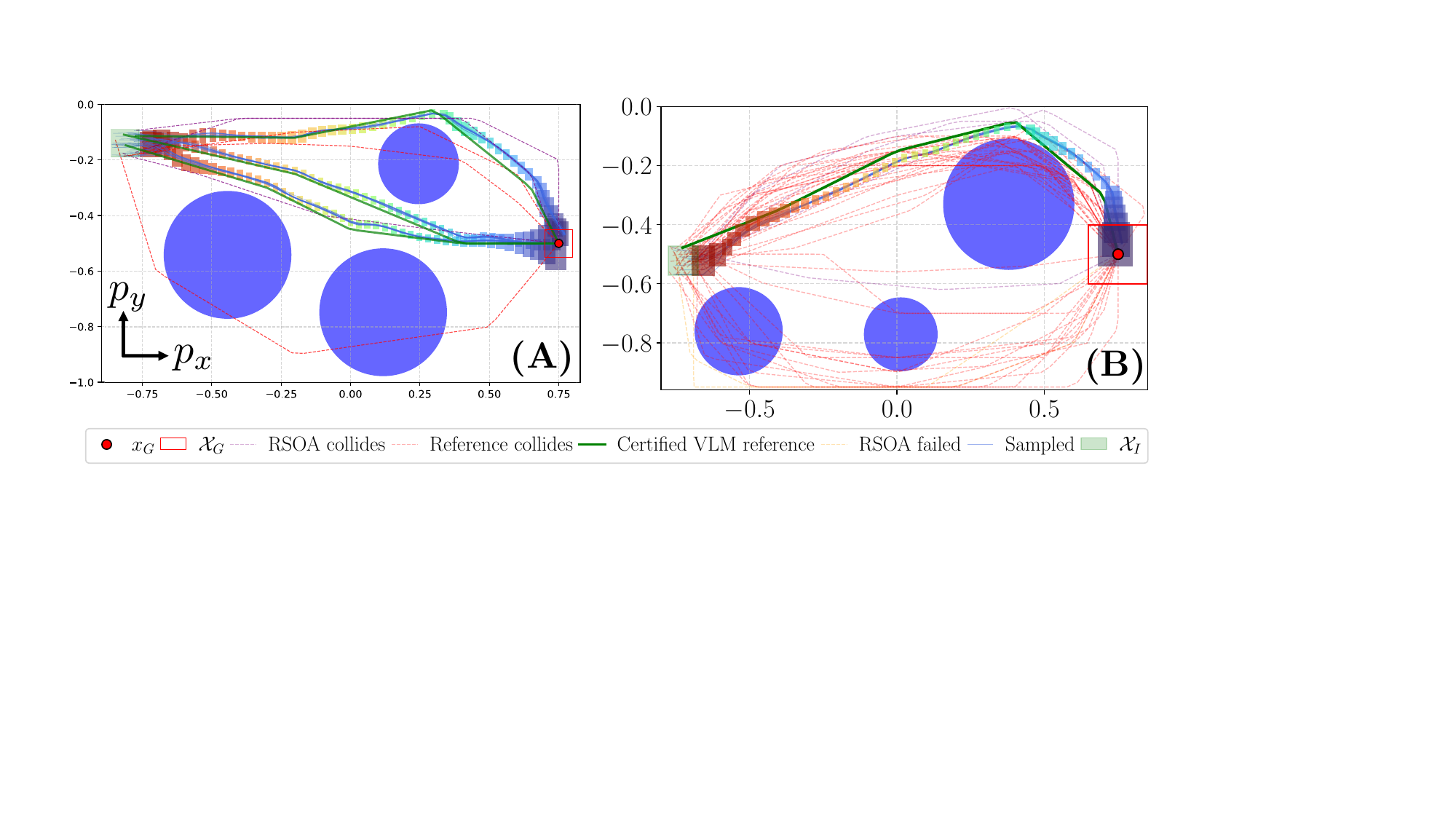}
    \vspace{-1em}
    \caption{(A) We illustrate our trajectory-library sampling approach on a VLM planner. VLM references that are unsafe (red) or have RSOAs in collision (purple) are discarded until a set of certified-safe RSOAs and references are collected (green). (B) Na\"ively executing the VLM as a planner leads to only 16\% safety starting from the green $\mathcal{X}_I$, whereas with our method enables 100\% safety.}
    \label{fig:VLM_multimodal}\vspace{-4pt}
\end{figure}

\vspace{-6pt}
\subsection{Verifying forward invariance for large $\mathcal{X}_I$ sets}\label{sec:results_large}
\looseness-1We demonstrate blending GMP samples with the tracking controller $\pi_\theta$ to verify safe goal reachability for large initial sets $\mathcal{X}_I$. We partition $\mathcal{X}_I$ into a grid, sample a NODE-based reference from each cell center, and compute the corresponding RSOA. Fig. \ref{fig:2dquad_funnel_lib} shows certified collision-free RSOAs (green) and those in collision (red). Once computed offline, these RSOAs allow fast evaluation of which subsets of $\mathcal{X}_I$ remain safe under new obstacle configurations, via collision-checking between hyper-rectangular RSOAs and the environment, taking $0.122\pm 0.003$s seconds. While grid-based partitioning works best in low dimensions, this experiment shows pre-certification enables efficient real-time safety queries; moreover, to reduce partitioning in higher dimensions, one can employ adaptive gridding.

\vspace{-8pt}
\begin{figure}[htbp]
    \centering
    \begin{minipage}[c]{0.48\linewidth}
        \includegraphics[width=0.98\linewidth]{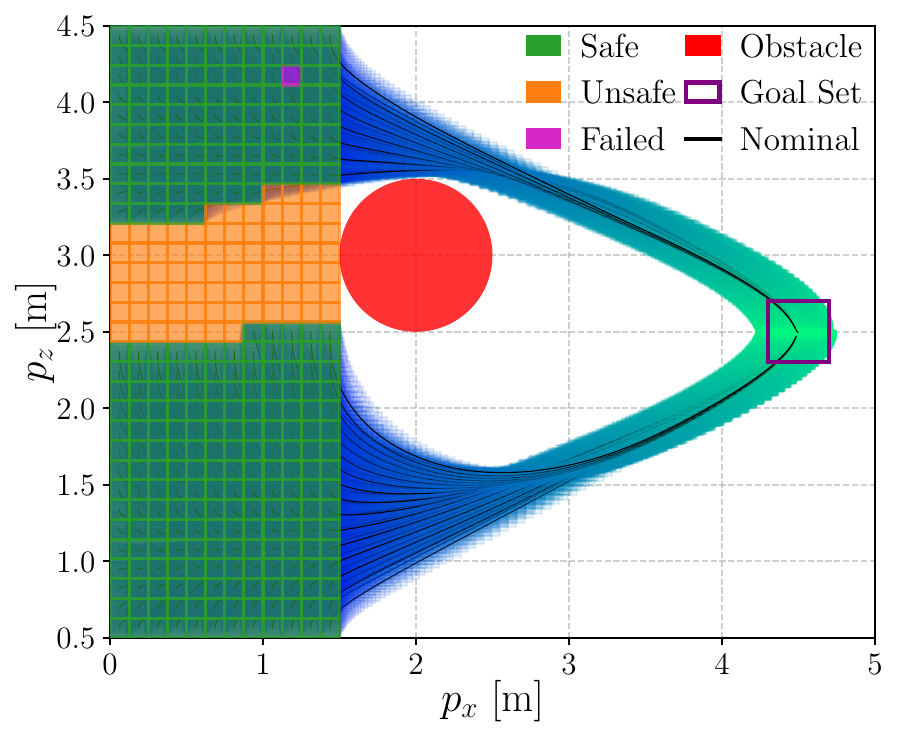}
    \end{minipage}
    \begin{minipage}[c]{0.5\linewidth}
        \caption{We verify forward invariance for a large set of initial conditions (green) by blending our tracking controller with several sampled references from a NODE-based planner, on analytical planar quadcopter dynamics.}
        \label{fig:2dquad_funnel_lib}
    \end{minipage}
\end{figure}
 
\vspace{-14pt}
\section{Conclusion}

\looseness-1We present SaGe-MP, a method for formally verifying the safety and dynamic feasibility of motion plans from large GMPs. Our approach stabilizes GMP-sampled references with a small neural tracking controller and applies NNV to the resulting closed-loop dynamics, enabling rigorous reachability guarantees even when the GMP is too large for direct analysis. By certifying multiple references offline and deploying them online via a trajectory library, our method preserves the GMP’s multimodal behavior while increasing safety rate when starting from a certified-safe region of attraction. 
In future work, we will explore tighter NN relaxations, e.g., those based on branch-and-bound, to reduce the conservativeness of our approach and enable faster reachable set calculations to enable safe deployment in nonstationary environments.

\vspace{-6pt}
\bibliographystyle{IEEEtran}

\appendix

\subsection{Experiment Details}\label{app:parameters}

\noindent\textbf{Dynamics:} For planar quadrotor examples, we used the parameters $m=0.5kg$, $g=-9.81m/s^2$, $I_y=0.01kg \cdot m^2$ for the dynamics model
\begin{equation}
    \dot{\mathbf{x}}\;=\; f(\mathbf{x},\mathbf{u})=\begin{bmatrix}
        x_4 \\ x_5 \\ x_6 \\ -\frac{u_1}{m} \sin(x_3)\\ g+\frac{u_1}{m}\cos(x_3)\\\frac{u_2}{I_y}
    \end{bmatrix}
\end{equation}

For 3D quadrotor examples, we used the parameters $m=1$kg, $g=-9.81$ m/s$^2$, $I_{x,y,z}=[0.5,0.1,0.3]$ kg·m$^2$ for the dynamics in 

\begin{equation*}
    \dot{\mathbf{x}} \;=\; f(\mathbf{x},\mathbf{u})=
    \begin{bmatrix}
        \dot{x} \\
        \dot{y} \\
        \dot{x} \\
        q \cdot \sin(\phi)/\cos{\theta}+r\cdot \cos{\phi}/\cos{\theta} \\
        q \cdot \cos{\phi} - r\cdot \sin{\phi} \\
        p + q \cdot \sin{\phi}\cdot \tan{\theta} +r\cdot \cos{\phi} \cdot \tan{\theta} \\
        \frac{u_1}{m} \cdot (\sin{\phi} \cdot \sin{\psi} + \cos{\phi} \cdot \cos{\psi} \cdot \sin{\theta}) \\ 
        \frac{u_1}{m} \cdot (\cos{\phi} \cdot \sin{\phi} - \cos{\phi} \cdot \sin{\psi} \cdot \sin{\theta}) \\ 
        g + u_1 \cdot (\cos{\phi} \cdot \cos{\theta}) / m\\
        ((I_y - I_z) / I_x) \cdot q\cdot r + \frac{u_2}{I_x} \\
        ((I_z - I_x) / I_y) \cdot p\cdot r + \frac{u_3}{I_y} \\
        ((I_x - I_y) / I_z) \cdot p\cdot q + \frac{u_4}{I_z} \\
    \end{bmatrix}
\end{equation*}

\begin{table}[ht]
  \centering
  \caption{Initial set $\mathcal{X}_I$ and disturbance $w_k$ used across experiments for unicycle and planar quadrotor dynamics.}
  \label{tab:initset_disturbance_compact}
  \resizebox{\columnwidth}{!}{%
  \begin{tabular}{@{}lcc@{}}
    \toprule
    \textbf{Case} & \boldmath$\mathcal{X}_I$\unboldmath & \boldmath$w_k$\unboldmath \\
    \midrule
    Unicycle + GT              & $x_I\!\pm[1.5,1.5,0,0]$                    & $\pm[0.05,0.05,0,0.025]$ \\
    Unicycle + Learned               & $x_I\!\pm[1.5,1.5,0,0]$                       & $\pm[0.05,0.05,0,0.025]$ \\
    Unicycle + VLM               & $x_I\!\pm[0.05,0.05,0]$                     & $\pm[0.01,0.01,0.05]$   \\
    2D Quad + GT     & $x_I\!\pm[0.15,0.15,0,0,0,0]$               & $\pm[0,0,0.25,0.25,0.25]$ \\
    2D Quad + Learned            & $x_I\!\pm[0.05,0.05,0,0,0,0]$               & $\pm[0,0,0,0.01,0.01,0.01]$ \\
    \bottomrule
  \end{tabular}%
  }
\end{table}

\begin{table}[ht]
  \centering
  \caption{Initial set $\mathcal{X}_I$ and disturbance $w_k$ used in the experiment for 3D quadrotor dynamics.}
  \label{tab:quad3d_gt_initdist}
  \resizebox{\columnwidth}{!}{%
  \begin{tabular}{@{}ll@{}}
    \toprule
    \textbf{3D Quad + GT} & \\ \midrule
    \boldmath$\mathcal{X}_I$\unboldmath & $x_I \pm [0.1,\,0.1,\,0.1,\,0,\,0,\,0,\,0,\,0,\,0,\,0,\,0,\,0]$ \\
    \boldmath$w_k$\unboldmath           & $\pm[0,\,0,\,0,\,0,\,0,\,0,\,0.1,\,0.1,\,0.1,\,0.1,\,0.1,\,0.1]$ \\
    \bottomrule
  \end{tabular}%
  }
\end{table}

\newpage
\noindent\textbf{Prompt for Fig. \ref{fig:real_robot_vlm}a:}

\begin{tcolorbox}[colback=gray!10, 
                  colframe=gray!50, 
                  arc=4mm,          
                  boxrule=0.5pt,    
                  width=\linewidth, 
                  ]

You are an expert motion planner for a mobile robot. Your task is to analyze the provided image and generate a safe and efficient series of intermediate waypoints for the robot to navigate from a start point to a goal point.

\noindent Environment Details:

\begin{itemize}
    \item Coordinate System: The image represents a 2D environment with:
    \begin{itemize}[label=$\circ$]
        \item The $x$-axis ranges from $-1.0$ (left edge) to $1.0$ (right edge).
        \item The $y$-axis ranges from $-1.0$ (bottom edge) to $1.0$ (top edge).
        \item The origin $(0.0, 0.0)$ is at the exact center of the image.
    \end{itemize}
\end{itemize}

\noindent Task Definition:

\begin{itemize}
    \item Start Point (Green Dot): [current start]
    \item Goal Point (Red Dot): [goal]
    \item Waypoints: Your goal is to find the most efficient and direct path possible.
    \begin{itemize}
        \item Actively look for and utilize safe passages or gaps between obstacles.
        \item The path should be smooth and avoid abrupt bends. The waypoints you provide will be linearly interpolated.
    \end{itemize}
\end{itemize}

\noindent Safety Constraints:

\begin{itemize}
    \item All food and drink items in the image are obstacles that the robot must avoid.
\end{itemize}

\noindent Required Output Format:

\begin{itemize}
    \item Your entire response MUST be a single JSON object and nothing else. Do NOT include any explanatory text, greetings, apologies, or markdown formatting like \texttt{```json}.
\end{itemize}

\end{tcolorbox}

\newpage
\noindent\textbf{Prompt for Fig. \ref{fig:real_robot_vlm}b:}
\begin{tcolorbox}[colback=gray!10, 
                  colframe=gray!50, 
                  arc=4mm,          
                  boxrule=0.5pt,    
                  width=\linewidth, 
                  ]
You are an expert motion planner for a mobile robot. Your task is to analyze the provided image and generate a safe and efficient series of intermediate waypoints for the robot to navigate from a start point to a goal point.
**Environment Details:**
* **Coordinate System:** The image represents a 2D environment with:
    * The x-axis ranges from -1.0 (left edge) to 1.0 (right edge).
    * The y-axis ranges from -1.0 (bottom edge) to 1.0 (top edge).
    * The origin (0.0, 0.0) is at the exact center of the image.
**Task Definition:**
* **Start Point (Green Dot):** $[{\textrm{start\_np}[0]:.2f}, {\textrm{start\_np}[1]:.2f}]$
* **Goal Point (Red Dot):** $[{\textrm{goal\_np}[0]:.2f}, {\textrm{goal\_np}[1]:.2f}]$
* **Waypoints:** Your goal is to find the most **efficient and direct path** possible.
    * **Actively look for and utilize safe passages or gaps *between* obstacles.**
    * The path should be smooth and avoid abrupt bends. The waypoints you provide will be linearly interpolated.
**Safety Constraints:**
* All food and drink items, **except the burger**, in the image are **obstacles** that the robot must avoid.
**Special Instructions:**
* The author is craving some food. Find the burger and please ensure the path goes over the burger.
**Required Output Format:**
Your entire response MUST be a single JSON object and nothing else. Do NOT include any explanatory text, greetings, apologies, or markdown formatting like ```json.
\end{tcolorbox}

\noindent\textbf{Prompt for Fig. \ref{fig:VLM_multimodal}a-b:}
\begin{tcolorbox}[colback=gray!10, 
                  colframe=gray!50, 
                  arc=4mm,          
                  boxrule=0.5pt,    
                  width=\linewidth, 
                  ]

You are an expert and cautious motion planner for a mobile robot operating in a 2D environment. Your primary goal is to find a safe path from a start point to a goal point, completely avoiding all obstacles.
\begin{itemize}
    \item Coordinate System: x from -1.0 to 1.0, y from -1.0 to 0.0.
    \item Start Point (Green Dot): [current start]
    \item Goal Point (Red Dot): [goal]
    \item Obstacles: Blue circles (no-go zones).
\end{itemize}
Task: Provide a series of [x, y] waypoints for a smooth, safe path around obstacles.

\noindent Critical Requirement: DO NOT generate a path that intersects any blue obstacles.

\noindent Output Format: Your entire response MUST be a single, raw JSON object. Example: {{"waypoints": [[-0.5, -0.2], [0.1, -0.5]]}}

\end{tcolorbox}

\end{document}